\newcommand{\circled}[1]{\ifcase#1\relax\or\ding{172}\or\ding{173}\or\ding{174}\or\ding{175}\or\ding{176}\or\ding{177}\or\ding{178}\or\ding{179}\or\ding{180}\or\ding{181}\else\textbf{Error!}\fi
}
\newcommand{\Bloc}{B_{\mathrm{loc}}}
\newcommand{\E}{\mathbb{E}}
\newcommand{\PP}{\mathbb{P}}
\newcommand{\ctD}{\tilde{\mathcal{D}}}
\newcommand{\cO}{\mathcal{O}}
\newcommand{\ctO}{\mathcal{\tilde{O}}}
\newcommand{\cA}{\mathcal{A}}
\newcommand{\cE}{\mathcal{E}}
\newcommand{\cS}{\mathcal{S}}
\newcommand{\cEs}[1][s]{\cE^{(#1)}}
\newcommand{\cL}{\mathcal{L}}
\newcommand{\cT}{\mathcal{T}}
\newcommand{\cC}{\mathcal{C}}
\newcommand{\R}{\mathbb{R}}
\newcommand{\Rg}{R_{\mathrm{grp}}}
\newcommand{\etae}{\eta_{\mathrm{e}}}
\newcommand{\constH}{{H}_{\mathrm{base}}}
\newcommand{\etamax}{\eta_{\max}}
\newcommand{\etastep}{\eta_{\mathrm{step}}}
\newcommand{\etacos}{\eta_{\mathrm{cos}}}
\newcommand{\vtheta}{{\bm{\theta}}}
\newcommand{\vphi}{{\bm{\phi}}}
\newcommand{\ve}{\bm{e}}
\newcommand{\vg}{\bm{g}}
\newcommand{\vv}{\bm{v}}
\newcommand{\vx}{\bm{x}}
\newcommand{\vy}{\bm{y}}
\newcommand{\vX}{\bm{X}}
\newcommand{\vzero}{\bm{0}}
\newcommand{\tr}{\mathrm{tr}}
\newcommand{\Hs}[1][s]{H^{(#1)}}
\newcommand{\vzeta}{\bm{\zeta}}
\newcommand{\hvzeta}{\hat{\vzeta}}
\newcommand{\vthetanull}{\vtheta_{\mathrm{null}}}
\newcommand{\onec}{\mathbbm{1}}
\newcommand{\Hsgd}{H_{\mathrm{SGD}}}
\newcommand{\xis}[1][s]{\xi^{(#1)}}
\newcommand{\mA}{\bm{A}}
\newcommand{\mI}{\bm{I}}
\newcommand{\mM}{\bm{M}}
\newcommand{\mSig}{\bm{\Sigma}}
\newcommand{\mPsi}{\bm{\Psi}}
\newcommand{\vths}[1][s]{\vtheta^{(#1)}}
\newcommand{\gij}[2]{g^{(#1, #2)}}
\newcommand{\gijl}[3]{g^{(#1, #2, #3)}}
\newcommand{\tu}{\tilde{u}}
\newcommand{\bvths}[1][s]{\bar{\vtheta}^{(#1)}}
\newcommand{\vW}{\bm{W}}
\newcommand{\vb}{\bm{b}}
\newcommand{\vsig}{\bm{\sigma}}
\newcommand{\vgs}[1][s]{\vg^{(#1)}}
\newcommand{\vphs}[1][s]{\vphi^{(#1)}}
\newcommand{\hvphs}[1][s]{\hat{\vphi}^{(#1)}}
\newcommand{\vDelta}{\bm{\Delta}}
\newcommand{\tDelta}{\tilde{\Delta}}
\newcommand{\tvDelta}{\tilde{\bm{\Delta}}}
\newcommand{\lO}[1]{\cO\!\left(#1\right)}
\newcommand{\norm}[1]{\|#1\|}
\newcommand{\abs}[1]{\lvert #1 \rvert}
\newcommand{\labs}[1]{\left\lvert #1 \right\rvert}
\newcommand{\set}[1]{\{#1\}}
\newcommand{\normtwo}[1]{\norm{#1}_2}
\newcommand{\lnormtwo}[1]{\left\|#1\right\|_2}
\newcommand{\Cov}{\mathrm{Cov}}
\newcommand{\inner}[2]{\left\langle{#1}, {#2}\right\rangle}
\newcommand{\inne}[2]{\langle{#1}, {#2}\rangle}
\newcommand{\rank}{\mathrm{rank}}
\newcommand{\poly}{\mathrm{poly}}
\newcommand{\Rtot}{R_{\mathrm{tot}}}
\newcommand{\Ttot}{T^{\mathrm{tot}}}
\newcommand{\tTtot}{\widetilde{T}^{\mathrm{tot}}}
\newcommand{\Tcm}{{T}^{\mathrm{comm}}}
\newcommand{\Tcp}{{T}^{\mathrm{comp}}}
\newcommand{\Ttotpara}{T^{\mathrm{tot}}_{\mathrm{para}}}
\newcommand{\Tcmpara}{{T}^{\mathrm{comm}}_{\mathrm{para}}}
\newcommand{\Tcppara}{{T}^{\mathrm{comp}}_{\mathrm{para}}}
\newcommand{\tTtotpara}{\widetilde{T}^{\mathrm{tot}}_{\mathrm{para}}}
\newcommand{\tTcmpara}{\widetilde{T}^{\mathrm{comm}}_{\mathrm{para}}}
\newcommand{\tTcppara}{\widetilde{T}^{\mathrm{comp}}_{\mathrm{para}}}
\newcommand{\fqsr}{f_{\mathrm{\mname}}}
\newcommand{\Tcmqsr}{T^{\mathrm{comm}}_{\mathrm{\mname}}}
\newcommand{\dd}{\mathrm{d}}
\newcommand{\floor}[1]{\lfloor #1 \rfloor}
\newcommand{\mname}{QSR\xspace}
\newcommand{\added}[1]{\textcolor{black}{#1}}
\newenvironment{add}{\color{black}}{}
\setlist[enumerate,1]{leftmargin=0.6cm}
\newtheorem{definition}{Definition}[section]
\newtheorem{theorem}{Theorem}[section]
\newtheorem{assumption}{Assumption}[section]
\newtheorem{lemma}{Lemma}[section]
\newtheorem{prelimlemma}{Preliminary Lemma}[section]
\title{A Quadratic Synchronization Rule for \\ Distributed Deep Learning}
\author{Xinran Gu$^1$\thanks{Equal contribution} \quad Kaifeng Lyu$^4$\footnotemark[1] \quad  Sanjeev Arora$^4$ \footnotemark[2] \quad  Jingzhao Zhang$^{1,2,3}$\footnotemark[2] \quad  Longbo Huang$^1$\thanks{Corresponding authors }\\
$^1$Institute for Interdisciplinary Information Sciences, Tsinghua University\\
$^2$Shanghai Qizhi Institute \qquad $^3$Shanghai AI Laboratory\\
$^4$Department of Computer Science \& Princeton Language and Intelligence, Princeton University\\
\texttt{gxr21@mails.tsinghua.edu.cn} \qquad \texttt{\{klyu,arora\}@cs.princeton.edu} \\
\texttt{\{jingzhaoz,longbohuang\}@tsinghua.edu.cn} 
}
\begin{document}
\maketitle

\vspace{-0.25in}
\begin{abstract}
  In distributed deep learning with data parallelism, synchronizing gradients at each training step can cause a huge communication overhead, especially when many nodes work together to train large models.
  Local gradient methods, such as Local SGD, address this issue by allowing workers to compute locally for $H$ steps without synchronizing with others, hence reducing communication frequency.
  While $H$ has been viewed as a hyperparameter to trade optimization efficiency for communication cost, recent research indicates that setting a proper  $H$ value can lead to generalization improvement. Yet, selecting a proper $H$ is elusive. This work proposes a theory-grounded method for determining $H$, named the Quadratic Synchronization Rule (QSR), which recommends dynamically setting $H$ in proportion to $\frac{1}{\eta^2}$ as the learning rate $\eta$ decays over time.
  Extensive ImageNet experiments on ResNet and ViT show that local gradient methods with QSR consistently improve the test accuracy over other synchronization strategies. Compared with the standard data parallel training, QSR enables Local AdamW on ViT-B to cut the training time on 16 or 64 GPUs down from 26.7 to 20.2 hours or from 8.6 to 5.5 hours and, at the same time, achieves $1.12\%$ or $0.84\%$ higher top-1 validation accuracy.
\end{abstract}
\vspace{-0.1in}
\vspace{-0.1in}
\section{Introduction}\label{sec:intro}

\vspace{-0.07in}
The growing scale of deep learning necessitates distributed training to reduce the wall-clock time.
Data parallel training is a foundational technique that distributes the workload of gradient computation to $K$ workers, also serving as a key building block of more advanced parallel strategies. At each step of this method, each worker first computes gradients on their own local batches of data. Then, they take an average over local gradients, which typically involves a costly All-Reduce operation. \added{Finally, they update the model parameter with the averaged gradient and a gradient-based optimizer OPT, e.g., SGD, AdamW. In this paper, we term the data parallel implementation of optimizer OPT as ``Parallel OPT''. See \Cref{algo:parallel} for the pseudocode.}
The cost for this data parallelism is obvious. 
Frequent gradient synchronization can induce huge communication overhead as the number of workers and model size grow, severely hindering the scalability of distributed training~\citep{1bitadam, 1bitlamb,slamb}.

One approach to reducing this communication overhead is Local SGD~\citep{stich2018local,zhouandcong,woodworth2020local}. Rather than synchronizing gradients at every step, Local SGD allows workers to independently train their local replicas using their own local batches with SGD updates. It is only after completing $H > 1$ local steps that these workers synchronize, where the model parameters get averaged over all replicas.
Notably, while we mention SGD, this approach can be readily adapted to other popular optimizers. In this paper, if a gradient-based optimizer OPT is used for local updates, we term the variant as ``Local OPT'' (e.g., Local SGD, Local AdamW), and collectively refer to this class of approaches as {\em local gradient methods}. We provide a pseudocode for local gradient methods in \Cref{algo:lopt}.

The main focus of this paper is to study the best strategies to set the synchronization period $H$ (i.e., the number of local steps per communication round) in local gradient methods.
While setting $H$ to a larger value reduces communication, a very large $H$ can hinder the training loss from decreasing at normal speed, since the local replicas may significantly diverge from each other before averaging.
Indeed, it has been observed empirically that larger $H$ leads to higher training loss after the same number of steps~\citep{wang2021cooperative,ortiz2021trade}, and efforts to analyze the convergence of local gradient methods in theory usually end up with loss bounds increasing with $H$~\citep{khaled2020tighter,stich2018local,haddadpour2019local,yu2019parallel}. To better trade-off between communication cost and optimization speed, \citet{kamp2014communication,wang2019adaptive,haddadpour2019local,shen2021stl} proposed adaptive synchronization schemes, such as linearly increasing $H$ as the iteration goes on \citep{haddadpour2019local}, or adjusting $H$ based on the variance in model parameters \citep{kamp2014communication}. Nonetheless, their effectiveness has only been validated on linear models or small-scale datasets, e.g., CIFAR-10/100.

All these strategies are developed to avoid sacrificing too much training loss,
but training loss is {\em never} the final evaluation metric that one cares about in deep learning.
Due to the overparameterized nature of modern neural networks,
reaching the same training loss does not correspond to the same performance on test data.
It has also been long known that the choice of optimizers or hyperparameters can change not only the optimization speed of the training loss but also their {\em implicit bias} towards solutions with different test accuracies.

The presence of this implicit bias indeed complicates the picture of setting $H$ in local gradient methods.
Though a large $H$ might be harmful for training loss,
it has been observed empirically that setting $H$ properly can sometimes improve rather than hurt the final test accuracy.
\citet{lin2020dont} are the first to report this phenomenon.
Comparing with running just the standard data parallel SGD (equivalent to $H=1$), they observed that
switching from SGD to Local SGD ($H>1$) halfway through consistently leads to higher final test accuracy. 
Local SGD with this specific schedule of $H$ is designated as {\em Post-local SGD}.
\citet{lin2020dont}'s work opens up a new angle in setting $H$ in local gradient methods, yet, 
the proposed schedule in Post-local SGD, referred to as the post-local schedule in this paper, is suboptimal in improving test accuracy. It was later reported by~\citet{ortiz2021trade} that Post-local SGD does not improve much on ImageNet. For both stepwise decay and cosine decay learning rate schedules, the test accuracy improvement of Post-local SGD diminishes as learning rate decreases. Further, it remains unclear whether the generalization benefit continues to appear when the optimizer is changed from SGD to adaptive gradient methods such as Adam/AdamW, which are now indispensable for training large models.

\begin{figure}[t] 
\vspace{-0.35in}
\begin{center}
    \subfigure[\small Local SGD on ResNet-152]{
    \includegraphics[width=0.43\textwidth]{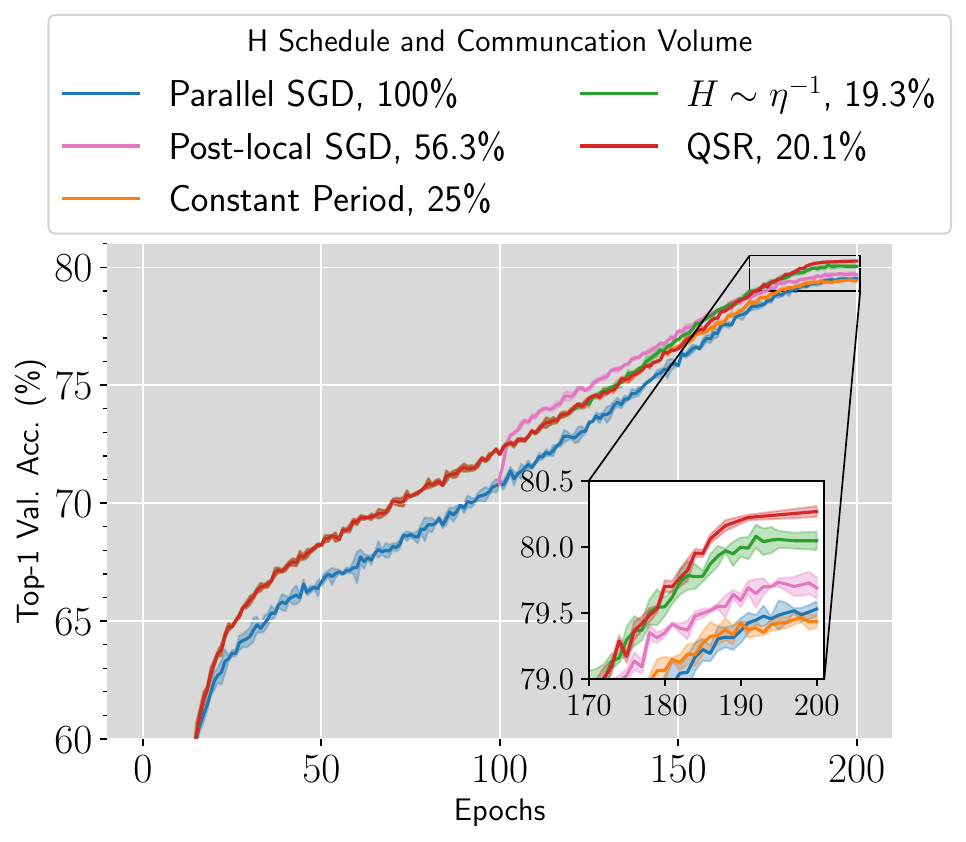}\label{fig:resnet_intro}
    }
    \subfigure[\small Local AdamW on ViT-B]{
        \includegraphics[width=0.46\textwidth]{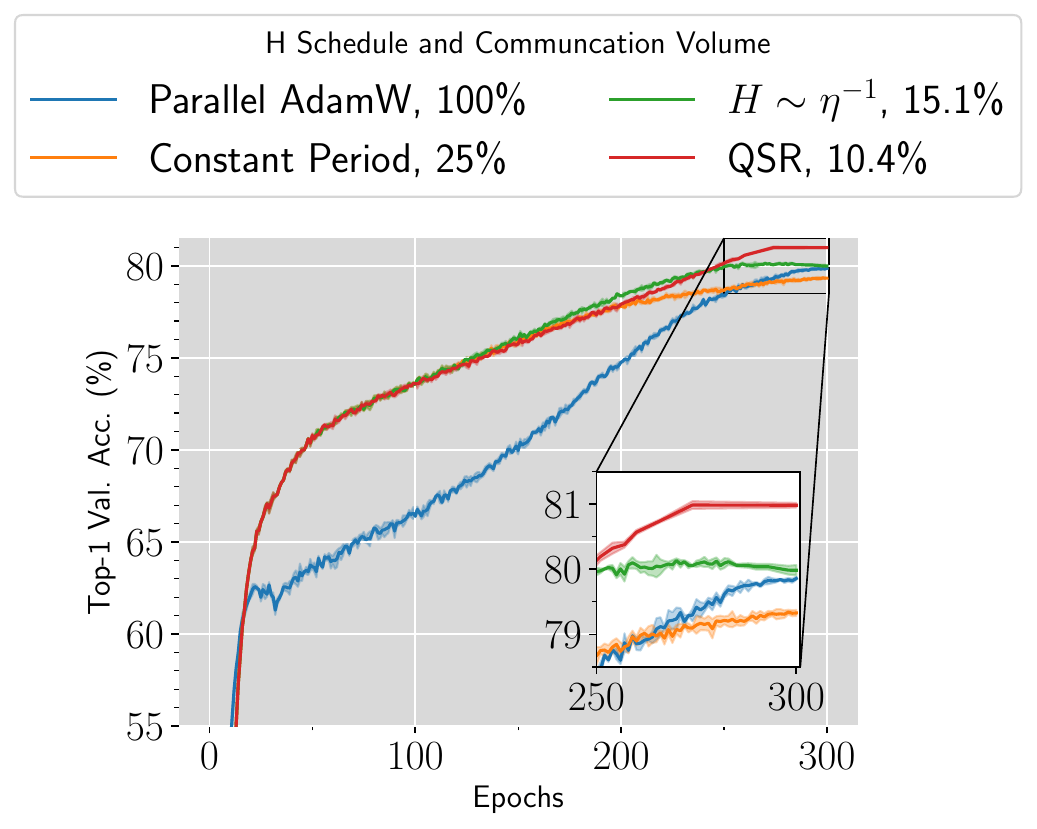}\label{fig:vit_intro}
    }
    \vspace{-0.18in}
    \caption{\small When training ResNet-152 and ViT-B on ImageNet with cosine learning rate decay, Local SGD/AdamW with QSR consistently outperforms data parallel methods or Local SGD/AdamW with other synchronization strategies in terms of top-1 validation accuracy, while only requiring 20.1\% and 10.4\% of the communication volume, respectively. With \mname, Local SGD on ResNet or Local AdamW on ViT cuts the training time from 20.7 to 18 hours or 26.7 to 20.2 hours on 16 GPUs, when compared with data parallel methods. We report the mean and the standard deviation over 3 runs. See \Cref{sec:exp details} for training details. }
        \label{fig:intro}
\end{center}
    \vspace{-0.35in}
\end{figure}

\myparagraph{Our Contributions.}  In this paper, we aim to propose a general and effective $H$ schedule that can be readily applied to various optimizers and neural network models. Specifically, we introduce a simple yet effective strategy, called {\em Quadratic Synchronization Rule} (QSR), for dynamically adjusting the synchronization period according to the learning rate: given a learning rate schedule, we set $H$ proportional to $\eta^{-2}$ as the learning rate $\eta$ decays.
This rule is largely inspired by a previous theoretical work~\citep{gu2023why}, which shows that the generalization benefits arise only if $H = \Omega(\frac{1}{\eta})$ when $\eta \to 0$, but did not make any recommendation on how to set $H$.

Our main contributions are:

\vspace{-0.1in}
\begin{enumerate}
    \item We propose the Quadratic Synchronization Rule (QSR) to simultaneously reduce the wall-clock time and improve the final test accuracy of local gradient methods. Based on the theoretical insights in Theorem~\ref{thm:main-approx}, we provide a theoretical separation among data parallel SGD, Local SGD with $H \sim \eta^{-1}$, and Local SGD with QSR in terms of SDE approximations. We show that QSR can help reduce sharpness faster and hence improve generalization.
    \item We demonstrate with ImageNet experiments that QSR can consistently improve the final test accuracy of ResNet-152 and ViT-B over other synchronization strategies, including constant-period and post-local schedules, and also $H \sim \eta^{-1}$ which one will expect to be optimal from the optimization perspective (\Cref{fig:intro}).
    \item We thoroughly validate the efficacy of QSR not only for Local SGD but also for Local AdamW, which is arguably more suitable for training large models. We also validate its efficacy for cosine, linear and step decay learning rate schedules that are commonly used in practice.
    \item We evaluate the communication efficiency of QSR on a 64-GPU NVIDIA GeForce RTX 3090 cluster. As an illustrative example, the standard data parallel AdamW takes 8.6 hours to train ViT-B for 300 epochs. With our QSR, Local AdamW cuts the training time down to 5.5 hours with even higher test accuracy.
\end{enumerate}

\vspace{-0.12in}
\section{Our Method: Quadratic Synchronization Rule} \label{sec:method}

\vspace{-0.07in}
Below we formulate the local gradient methods and present our Quadratic Synchronization Rule. 

\myparagraph{Local Gradient Methods.} 
Given any gradient-based optimizer OPT, the corresponding local gradient method consists of multiple communication rounds. At the $s$-th round, each of the $K$ workers (say the $k$-th) gets a local copy of the global iterate $\bvths$, i.e., $\vths_{k, 0}\gets \bvths$, and then performs $H$ steps of local updates. At the $h$-th local step of the $s$-th round, which corresponds to the $(sH+h)$-th iteration globally,
each worker gets a batch of $\Bloc$ samples $(\xis_{k,h,1}, \dots, \xis_{k,h,\Bloc})$ from a globally shared dataset $\tilde{D}$, computes the gradient on that batch, and updates the model with optimizer OPT and learning rate $\eta_{sH+h}$:
{
\setlength{\abovedisplayskip}{-5pt}
\setlength{\belowdisplayskip}{0pt}
\begin{align}\label{eq:upd intro localsgd}
    \vths_{k,h+1} \gets \mathrm{OPT}( \vths_{k, h} ,\eta_{sH+h}, \vgs_{k, h})\quad \text{where} \quad
    \vgs_{k, h} = \frac{1}{\Bloc}\sum_{i=1}^{\Bloc} \nabla \ell(\vths_{k,h}; \xis_{k,h,i}).
\end{align}
}
After finishing $H$ steps of local updates, all workers average their local models to generate the next global iterate: $\bvths[s+1] \gets \frac{1}{K}\sum_{k=1}^K \vths_{k, H}$. Note that conventional local gradient methods set the synchronization period as a constant, denoted as $H$,  throughout training. See also \Cref{algo:lopt}. 


\myparagraph{Quadratic Synchronization Rule.}
Given a learning rate schedule $\eta_t, t\in \set{0,\cdots, T-1}$
that decays with time, instead of keeping $H$ constant,
we propose to dynamically increase the synchronization period $\Hs$
at each round $s$ as the learning rate decreases.
More specifically, if at the global iteration $t$ we need to start a new communication round,
then we set
{
\setlength{\abovedisplayskip}{-0.5pt}
\setlength{\belowdisplayskip}{0pt}
\begin{align}
    \Hs := \max \left\{
        \constH,
        \left\lfloor\left(\frac{\alpha}{\eta_t}\right)^2 \right\rfloor\right\}.\label{eq:qsr}
\end{align}
}
Here $\constH$ is a constant indicating the minimum number of local steps
one would like to use for each round,
which should be set according to the relative cost of computation and communication.
The coefficient $\alpha$, termed the ``growth coefficient'' henceforth, is a hyperparameter controlling
how fast $\Hs$ increases as $\eta_t$ decreases. 

As suggested by our later theorem~\ref{thm:main-approx}, $\alpha$ should be set as a small constant. In our experiments, 
we tune $\alpha$ properly between $0.01$ and $0.5$ and 
test the effectiveness of \mname with $\constH=2, 4, 8$. Note that the last communication round may not finish
exactly at the last iteration of the learning rate schedule.
If this is the case, we force a synchronization at the last step
by setting $\Hs := T - t$.

A surprising part of our method is that we use the power $2$ in the above formula \eqref{eq:qsr}. This choice of power $2$ is 
inspired by the analysis in \citet{gu2023why}, 
which suggests that setting $H = \Omega(\frac{1}{\eta})$ is beneficial
for reducing the sharpness of the local landscape.
Indeed, $\Hs$ could have been set to 
$\Hs := \max \left\{
        \constH,
        \left\lfloor\left(\frac{\alpha}{\eta_t}\right)^\gamma \right\rfloor\right\}$
for any $\gamma$.
However, using $\gamma = 2$ is crucial for the success of our method,
and we will provide theoretical justification and empirical evidence for this choice in~\Cref{sec:derivation}. \added{We also visualize the $H$ schedule for \mname in \Cref{fig:h-sche} in the appendix. } 



\myparagraph{Dealing with Learning Rate Warmup.}
Many learning rate schedules use a warmup phase
where the learning rate increases linearly from $0$ to $\eta_{\max}$,
and then decays monotonically.
This warmup phase is often used to avoid the instability caused by the initial large learning rate \citep{goyal2017accurate}.
Our rule is not directly compatible with the warmup phase,
since it is designed for a decaying learning rate,
but the learning rate increases rather than decreases in this phase.
Practically, we recommend setting $\Hs$
as the value to be used in the communication round right after the warmup.

\vspace{-0.1in}
\section{Theoretical Motivations of Quadratic Synchronization Rule} \label{sec:derivation}

\vspace{-0.05in}
To justify our choice of power 2, we build on the same theoretical setup as~\citet{gu2023why}
to analyze the Stochastic Differential Equation (SDE) approximation of SGD and Local SGD using different scalings of $H$ with respect to $\eta$.
Though the learning rate continuously decays over time in most of our experiments,
it does not usually change much within a couple of epochs.
Inspired by this, we take a quasistatic
viewpoint:
consider a significant period of time where the learning rate is relatively constant,
and directly treat the learning rate as a real constant $\eta$.
First, we recap \citet{gu2023why}'s theory that applies to Local SGD with $H \sim \eta^{-1}$,
then we show how to generalize the result to our rule where $H \sim \eta^{-2}$, leading to a stronger implicit bias
towards flatter minima.


\myparagraph{Setup.} Consider optimizing the loss function
$\cL(\vtheta) := \E_{\xi \sim \ctD}[\ell(\vtheta; \xi)]$,
where $\vtheta \in \R^d$ is the parameter vector
and $\ell(\vtheta; \xi)$ is the loss function for a single data sample $\xi$
drawn from a training set/training distribution $\ctD$.
We use $\mSig(\vtheta) := \Cov_{\xi \sim \ctD}[\nabla \ell(\vtheta; \xi)]$
to denote the covariance matrix of the stochastic gradient $\nabla \ell(\vtheta; \xi)$ at $\vtheta$. Following~\citet{gu2023why}, we make regularity assumptions on $\cL(\vtheta), \mSig(\vtheta)$ and $\normtwo{\nabla \ell(\vtheta; \xi)}$ in \Cref{assumption:l}, and we assume that $\cL$ has a manifold $\Gamma$ of minimizers in \Cref{assumption:manifold}. Our analysis is based on SDE approximations near $\Gamma$, providing a clean view of how different choices of $H$ affect the selection of minimizers by Local SGD. 

\myparagraph{SDE approximations of SGD and Local SGD.} SDE is a powerful tool  to precisely characterize 
the effect of noise in SGD, leading to many applications such as Linear Scaling Rule~\citep{goyal2017accurate}.
The SDE 
$\dd \vtheta(t) = -\nabla \cL(\vtheta(t)) \dd t
    + \tfrac{1}{\sqrt{B}} \mSig(\vtheta(t))^{1/2} \dd \vW_t$
is conventionally used in the literature \citep{jastrzkebski2017three,smith2020generalization, li2021validity}, 
where $\vW_t$ is the standard Wiener process.
In this SDE, each discrete step corresponds to a continuous time interval of length $\eta$,
and the expected gradient and gradient noise
become a deterministic drift term and a stochastic diffusion term, respectively.
When the training proceeds to a point $\vtheta(t)$ near a minimizer $\vzeta_0$ on the manifold $\Gamma$,
the gradient $\nabla \cL(\vtheta(t))$ is almost zero but the gradient noise
$\tfrac{1}{\sqrt{B}} \mSig(\vtheta(t))^{1/2} \dd \vW_t$
drives the parameter to diffuse locally. This can be captured by 
a careful first-order approximation of the dynamics, leading to an Ornstein-Uhlenbeck process~\citep{zhu2019anisotropic,li2019stochastic,izmailov2018averaging}.
However, these rough approximations only hold for about $\cO({\eta}^{-1})$ steps, whereas neural networks in practice are usually trained for much longer.

Recently, a series of works \citep{blanc2020implicit,damian2021label,li2021happens} study the dynamics of SGD on a \emph{longer} horizon. They show that higher-order terms can accumulate over time and drive this local diffusion to gradually move on the manifold $\Gamma$. Among them, \citet{li2021happens} precisely characterized this with an SDE tracking the {\em gradient flow projection} of $\vtheta(t)$ on $\Gamma$,  denoted as $\Phi(\vtheta(t))$ (see \Cref{def:gf projection}). Here, $\Phi(\vtheta(t))$ can be thought of as a natural ``center'' of the local diffusion. This SDE, termed as Slow SDE, tracks the dynamics of SGD over $\cO(\eta^{-2})$ steps, which is much longer than the $\cO(\eta^{-1})$ horizon for conventional SDEs. 

To provide a theoretical understanding of why Local SGD generalizes better than SGD, \cite{gu2023why} derived the Slow SDEs for Local SGD using the scaling $H\sim \eta^{-1}$. By comparing the Slow SDEs, they argued that Local SGD drifts faster to flatter minima than SGD. However, their analysis does not encompass the more aggressive scaling $H\sim \eta^{-2}$ recommended by our \mname. Recognizing this gap, we derive the Slow SDE for this scaling, enriching the theoretical framework for the generalization behavior of Local SGD.    
Below, we first present the Slow SDEs for SGD and Local SGD with $H\sim \eta^{-1}$ and $H\sim\eta^{-2}$,
then we interpret why $H\sim\eta^{-2}$ may generalize better.


\begin{definition}[Slow SDE for SGD, informal, \citep{li2021happens,gu2023why}] \label{def:slow-sde-sgd-informal}
    Given $\vzeta_0 \in \Gamma$, define $\vzeta(t)$ as the solution to the following SDE with initial condition $\vzeta(0)=\vzeta_0$:
    \begin{align}\label{eq:sgd-zeta}
    \dd \vzeta(t)&=P_{\vzeta}\Big(\underbrace{\tfrac{1}{\sqrt{B}} \mSig_{\parallel}^{\sfrac{1}{2}}(\vzeta)\dd \vW_t}_{\text{(a)\ diffusion on $\Gamma$}}
    \underbrace{-\tfrac{1}{2B} \nabla^3 \cL(\vzeta)[\widehat{\mSig}_{\Diamond}(\vzeta)] \dd t}_{\text{(b)\ drift on $\Gamma$}}
    \Big).
    \end{align}
    Here, $P_{\vzeta}$ is a projection operator of differential forms to ensure
    that taking an infinitesimal step from $\vzeta \in \Gamma$ remains on the manifold $\Gamma$.
    $B$ is the total batch size.
    $\mSig_{\parallel}(\vzeta)$ and $\widehat{\mSig}_{\Diamond}(\vzeta)$ are certain PSD matrices related to gradient noise and Hessian.
    See~\Cref{def:slow-sde-sgd-formal} for the full definition.
\end{definition}

\begin{definition}[Slow SDE for Local SGD with $H\sim \eta^{-1}$, informal \citep{gu2023why}]\label{def:slow-sde-local-sgd-lsr-informal} Consider the scaling $H=\beta/\eta$ for some constant $\beta$. 
    Given $\vzeta_0 \in \Gamma$, define $\vzeta(t)$ as the solution to the following SDE with initial condition $\vzeta(0)=\vzeta_0$:
\begin{align}\label{eq:lsr-zeta}
    \dd \vzeta(t)&=P_{\vzeta}\Big(\underbrace{\tfrac{1}{\sqrt{B}} \mSig_{\parallel}^{\sfrac{1}{2}}(\vzeta)\dd \vW_t}_{\text{(a)\ diffusion on $\Gamma$}}
    \underbrace{-\tfrac{1}{2B} \nabla^3 \cL(\vzeta)[\widehat{\mSig}_{\Diamond}(\vzeta)] \dd t}_{\text{(b)\ drift on $\Gamma$, same as SGD}}
    \underbrace{-\tfrac{K-1}{2B} \nabla^3 \cL(\vzeta)[\widehat{\mPsi}(\vzeta; H \eta)] \dd t}_{\text{(c)\ an extra drift term on $\Gamma$}}
    \Big),
\end{align}
where $K$ is the number of workers, $B, \mSig_{\parallel}(\vzeta)$ and $\widehat{\mSig}_{\Diamond}(\vzeta)$ are the same as in~\Cref{def:slow-sde-sgd-informal}. Here, $\widehat{\mPsi}(\vzeta; \beta)$ is a PSD matrix depending on gradient noise and Hessian. It scales with $\beta$ as $\lim_{\beta \to 0}\widehat{\mPsi}(\vzeta; \beta) = \vzero$, $\lim_{\beta \to +\infty}\widehat{\mPsi}(\vzeta; \beta) = \widehat{\mSig}_{\Diamond}(\vzeta)$. \footnote{Given $\vzeta$, $\widehat{\mPsi}(\vzeta; \beta)$ is a monotonically increasing function of $\beta$ in the eigenspace of the Hessian matrix $\nabla^2 \cL(\vzeta)$.} See~\Cref{def:slow-sde-local-sgd-lsr-formal} for the full definition.
\end{definition}
\begin{definition}[Slow SDE for Local SGD with QSR]
Given $\vzeta_0 \in \Gamma$, define $\vzeta(t)$ as the solution to the following SDE with initial condition $\vzeta(0)=\vzeta_0$:
\begin{align}\label{eq:qsr-zeta}
    \dd \vzeta(t)&=P_{\vzeta}\Big(\underbrace{\tfrac{1}{\sqrt{B}} \mSig_{\parallel}^{\sfrac{1}{2}}(\vzeta)\dd \vW_t}_{\text{(a)\ diffusion on $\Gamma$}}
    \underbrace{-\tfrac{K}{2B} \nabla^3 \cL(\vzeta)[\widehat{\mSig}_{\Diamond}(\vzeta)] \dd t}_{\text{(b)\ drift on $\Gamma$, $K$ times larger}}
    \Big),
\end{align}
where $K, B, \mSig_{\parallel}(\vzeta)$ and $\widehat{\mSig}_{\Diamond}(\vzeta)$ are defined in \Cref{def:slow-sde-sgd-informal,def:slow-sde-local-sgd-lsr-informal}.
\end{definition}
The following approximation theorem indicates that when the learning rate $\eta$ and the growth coefficient $\alpha$ for \mname are small, the above Slow SDEs closely track their discrete counterparts. The approximation theorem for \mname is new, and we defer the proof to \Cref{sec:proof-approx}.
\begin{theorem}[Weak Approximations]\label{thm:main-approx}
    Let $T > 0$ be a constant and $\vzeta(t)$ be the solution to one of the above Slow SDEs with the initial condition $\vzeta(0) = \Phi(\vths[0]) \in \Gamma$. Let $g(\vtheta)$ be any  $\cC^4$-smooth function.
\begin{enumerate}
    \item  \citep{gu2023why} For SGD, let $\vzeta(t)$ be the solution to \eqref{eq:sgd-zeta}. Then, $\max_{0 \le s \le \frac{T}{\eta^2}} \abs{\E[g(\Phi(\vtheta_s))] - \E[g(\vzeta(s\eta^2))]} = \ctO(\eta^{0.25})$.
    \item  \citep{gu2023why} For Local SGD with $H=\beta/\eta$ for some constant $\beta$, let $\vzeta(t)$ be the solution to \eqref{eq:lsr-zeta}. Then, $\max_{0 \le s \le \frac{T}{H\eta^2}} \abs{\E[g(\Phi(\vths))] - \E[g(\vzeta(sH\eta^2))]} = \ctO(\eta^{0.25})$.
    \item For Local SGD with $H=(\frac{\alpha}{\eta})^2$, where the positive constant $\alpha$ is small but larger than $\Omega(\eta^\gamma)$ for all $\gamma > 0$, let $\vzeta(t)$ be the solution to \eqref{eq:qsr-zeta}. Then, $\max_{0 \le s \le \frac{T}{H \eta^2}} \abs{\E[g(\Phi(\vths))] - \E[g(\vzeta(sH\eta^2))]} = \cO(\alpha^2)$.
\end{enumerate}
Here, $\cO(\,\cdot\,)$ and $\ctO(\cdot)$  hide constants that are independent of $\alpha$ and $\eta$ but can depend on $g$ and $T$.  $\ctO(\cdot)$ also hides log terms.
\end{theorem}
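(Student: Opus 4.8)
The plan is to follow the same strategy as \citet{gu2023why}'s proofs for the SGD and $H\sim\eta^{-1}$ cases (parts 1 and 2), adapting it to the regime $H=(\alpha/\eta)^2$. The backbone is the standard three-layer comparison: (i) show that over a single communication round of $H$ local steps, each worker's iterate, started from a point near $\Gamma$, can be tracked by a local expansion around the gradient-flow projection $\Phi$; (ii) average over the $K$ workers and compute the one-round expected displacement of $\Phi(\bvths)$ and its second moment; (iii) match these increments, up to the claimed error, to the coefficients of the Slow SDE \eqref{eq:qsr-zeta}, then invoke a generic weak-approximation / martingale-type argument (a moment-matching lemma of the type used by \citet{li2021happens}) to pass from one-round increments to the $\cO(T/(H\eta^2))$-round horizon. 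The key structural observation driving the $K$-times-larger drift is that after $H\gg 1/\eta$ local steps each worker's local noise-driven fluctuation around $\Phi$ has essentially reached its stationary (Ornstein--Uhlenbeck) regime, so each worker independently contributes a full copy of the ``$\widehat{\mSig}_\Diamond$'' drift term; since the workers' noise is independent, the diffusion part still averages down by $1/K$ (giving the same $\tfrac{1}{\sqrt B}\mSig_\parallel^{1/2}$ as SGD with total batch $B$), but the deterministic higher-order drift does \emph{not} average down, yielding the factor $K$ in term (b) and the disappearance of the interpolating term (c) (equivalently, $\widehat{\mPsi}(\vzeta;H\eta)\to\widehat{\mSig}_\Diamond$ as $H\eta=\alpha^2/\eta\to\infty$, and $-\tfrac{1}{2B}\widehat\mSig_\Diamond-\tfrac{K-1}{2B}\widehat\mSig_\Diamond=-\tfrac{K}{2B}\widehat\mSig_\Diamond$).

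Concretely, the steps I would carry out, in order, are: First, set up the within-round analysis: write $\vths_{k,h}=\Phi(\vths_{k,h})+(\vths_{k,h}-\Phi(\vths_{k,h}))$, and expand the loss, gradient, and noise covariance to the needed order (Taylor expansion of $\nabla\cL$, $\mSig$ and $\nabla^3\cL$ near $\Gamma$), exactly as in the cited Slow SDE derivations, controlling the transverse fluctuation by $\cO(\sqrt{\eta})$ with high probability via the contraction of the transverse Hessian. Second, quantify how close the per-worker transverse process is to its stationary OU law after $H=(\alpha/\eta)^2$ steps: the mixing time of the transverse OU process is $\cO(1/\eta)$ steps, so after $H$ steps the remaining discrepancy is exponentially small in $H\eta=\alpha^2/\eta$, hence negligible relative to $\alpha^2$. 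Third, compute the expected one-round increment of $g(\Phi(\bvths))$: the drift contribution from each worker accumulates over the $H$ steps to give $-\tfrac{1}{2B}\nabla^3\cL[\widehat\mSig_\Diamond]\cdot(H\eta^2)$ per worker (using that one round corresponds to continuous time $H\eta^2$), times $K$ after averaging over workers but \emph{without} the $1/K$ attenuation because the drift is not a mean of mean-zero terms; the diffusion contribution, being a sum of $K$ independent mean-zero martingale increments each of size $\cO(\sqrt{H}\,\eta)$, combines to variance $\cO(H\eta^2/K)\cdot K/K = \cO(H\eta^2/B)\cdot B/\!\ldots$ — I would carefully track that the total batch $B=K\Bloc$ gives the $\tfrac1B\mSig_\parallel$ scaling. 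Fourth, collect all Taylor remainders, the OU-mixing error, and the discretization error: these sum to $\cO(\alpha^2)$ per the constraint $\alpha=\omega(\eta^\gamma)$ for every $\gamma$ (so that $\eta$-polynomial errors are dominated by any fixed power of $\alpha$), and the fact that the dominant ``finite-$H$'' correction is of order $\alpha^2$. Fifth, invoke the moment-matching lemma to accumulate over $\Theta(T/(H\eta^2))$ rounds, yielding the uniform-in-$s$ bound $\cO(\alpha^2)$.

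The main obstacle is the third step — obtaining the one-round increment with the \emph{clean} $\cO(\alpha^2)$ error rather than the $\ctO(\eta^{0.25})$ rate of parts 1--2. In the earlier cases $\beta$ (equivalently $H\eta$) is a fixed constant, so the approximation error is genuinely controlled by powers of $\eta$; here $H\eta=\alpha^2/\eta\to\infty$, and one must show that (a) the finite-$H$ deviation of $\widehat{\mPsi}(\vzeta;H\eta)$ from its limit $\widehat\mSig_\Diamond$ is $\cO(\alpha^2)$ — this needs a quantitative rate for the convergence $\widehat\mPsi(\vzeta;\beta)\to\widehat\mSig_\Diamond$ as $\beta\to\infty$ in the Hessian eigenbasis, presumably exponential in $\beta$ with the slowest-decaying mode setting the rate, so that with $\beta=\alpha^2/\eta$ it is far smaller than $\alpha^2$ — and (b) the accumulation of higher-order Taylor terms over a \emph{long} round of length $H\eta^2=\alpha^2$ does not blow up: one round now spans continuous time $\alpha^2$, which is small, so the round-level error should indeed be $\cO(\alpha^4)$ or so, but making this rigorous requires re-deriving the within-round estimates with explicit dependence on $H\eta^2=\alpha^2$ rather than treating it as $\cO(1)$. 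I expect the bulk of the work to be bookkeeping these $\alpha$- versus $\eta$-dependences throughout the expansion and verifying that the assumption $\alpha=\omega(\eta^\gamma)\ \forall\gamma>0$ is exactly what kills every stray $\eta$-power, leaving the advertised $\cO(\alpha^2)$.
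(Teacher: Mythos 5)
Your high-level scheme (one-round moment computation, averaging across $K$ workers, accumulation over $\Theta(T/\alpha^2)$ rounds via a moment-matching lemma) is the scheme the paper uses, but the route you propose for the core step --- the one-round increment --- is genuinely different, and the difference matters.

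You plan to re-derive the within-round dynamics from first principles: Taylor-expand around $\Gamma$, analyze the transverse OU process and its $\cO(1/\eta)$-step mixing, and pass to the limit $\widehat{\mPsi}(\vzeta;H\eta)\to\widehat{\mSig}_{\Diamond}$ as $H\eta=\alpha^2/\eta\to\infty$ with a quantitative rate. You correctly flag this as the main obstacle: it would force you to re-derive all the within-round estimates with explicit $\alpha$-versus-$\eta$ bookkeeping, essentially duplicating the internal machinery of \citet{gu2023why}, whose bounds degrade as $H\eta\to\infty$. The paper sidesteps this entirely with a single structural observation: within one communication round, each worker's trajectory is \emph{exactly} $H$ steps of vanilla SGD with local batch size $\Bloc=B/K$, and since one round spans continuous time $H\eta^2=\alpha^2\le T$, Gu et al.'s SGD Slow SDE approximation theorem applies \emph{as a black box} to each worker (set $K'=1$, $H'=1/\eta$, observation time $T'=\alpha^2$). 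This gives the per-worker moments of $\Phi(\vths_{k,H})-\vphs$ up to error $\ctO(\eta^{0.25})$, which is $o(\poly(\alpha))$ because $\alpha=\Omega(\eta^\gamma)$ for all $\gamma>0$; an It\^o--Taylor expansion of the SGD Slow SDE over time $\alpha^2$ then yields the per-worker drift $\tfrac{\alpha^2}{2\Bloc}\partial^2\Phi[\mSig]$ and covariance $\tfrac{\alpha^2}{\Bloc}\mSig_{\parallel}$ with $\cO(\alpha^4)$ remainder, and $\widehat{\mPsi}$ never enters because the interpolating term (c) of the $H\sim\eta^{-1}$ SDE is never invoked. Averaging across workers keeps the drift unchanged (rewritten in $B=K\Bloc$ this is the factor-$K$) while the martingale part contracts by $1/K$, giving the $\tfrac1B\mSig_{\parallel}$ diffusion. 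Accumulating $\cO(\alpha^4)$ per round over $\Theta(T/\alpha^2)$ rounds delivers $\cO(\alpha^2)$. So your approach is conceptually sound, but without the ``each round is a complete SGD trajectory with batch $\Bloc$'' reduction you would be stuck exactly where you predicted; that reduction is the key idea you are missing, and it also explains cleanly why the factor $K$ appears and why the $1/K$ cancellation hits only the diffusion.
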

By comparing the Slow SDEs, we can predict the generalization order for different scaling as \textbf{\mname} \(>\) \(\mathbf{\{H\sim \bm{\eta}^{-1}\}}\) \(>\) \textbf{\{constant \( \mathbf{H} \}\)}, which we explain in detail below. 
\myparagraph{Interpretation of the Slow SDEs.} We first focus on the Slow SDE for SGD \eqref{eq:sgd-zeta}. The key component of this Slow SDE is the drift term (b), which comes from higher-order approximations of the aforementioned local diffusion that happens in $\cO(\eta^{-1})$ steps.
Viewing $\nabla^3 \cL(\vzeta)[\widehat{\mSig}_{\Diamond}(\vzeta)]$ as a semi-gradient of $\inne{\nabla^2 \cL(\vzeta)}{\widehat{\mSig}_{\Diamond}(\vzeta)}$ that discards the dependence of $\vtheta$ in $\widehat{\mSig}_{\Diamond}(\vzeta)$, we can interpret the Slow SDE as a continuous version of a semi-gradient method for reducing $\inne{\nabla^2 \cL(\vzeta)}{\widehat{\mSig}_{\Diamond}(\vzeta)}$ on $\Gamma$.
Since the Hessian matrix $\nabla^2 \cL(\vzeta)$ determines the local curvature of the loss landscape,
we can conclude from the Slow SDE that SGD tends to reduce sharpness and move towards flatter minimizers in $\cO(\eta^{-2})$ steps. Reduced sharpness has been shown to yield better sample complexity bounds in specific theoretical settings. For details, we refer the readers to~\citet{li2021happens}.

Now, we turn to the Slow SDE for \mname. Compared with the SDE for SGD, it possesses a $K$ times larger drift term, leading to much faster sharpness reduction than SGD. An intuitive explanation for why this extra drift arises is as follows. Since the local batch size is $K$ times smaller than the global one,
this local diffusion at each worker is much more significant than that in parallel SGD, thereby leading to an extra drift term in Slow SDE accumulated from higher-order terms.

The case of Local SGD with $H={\beta}/\eta$ is somewhere in between QSR and  SGD. Compared with the SDE for SGD, it has an extra drift term (c), where $\beta$ serves as the knob to control the magnitude of the drift term.  For small $\beta$, $\widehat{\mPsi}(\vzeta)$ diminishes to zero, yielding the same SDE as SGD. By contrast, as $\beta$ goes to infinity, $\widehat{\mPsi}(\vzeta)$ approximates $\widehat{\mSig}_{\Diamond}(\vzeta)$, leading to the Slow SDE for \mname. 

\myparagraph{Comparison of different scalings.} Based on the interpretation, keeping $H$ constant as $\eta$ diminishes is equivalent to setting a small $\beta$ for $H={\beta}/\eta$, making the extra drift term negligible and thus yielding nearly no generalization benefit over SGD. Conversely, the SDE for $H={\beta}/\eta$ converges to the SDE of \mname in the limit $\beta\to \infty$, maximizing the drift term. But in practice, $\beta$ cannot be arbitrarily large. In Theorem 3.3 of \cite{gu2023why}, the distance between the iterate and $\Gamma$ blows up as $\ctO(\sqrt{\beta\eta})$, 
suggesting that setting a very large $\beta$ for a not-so-small $\eta$ can blow up the loss. Therefore, the generalization performance of  $H {\sim}\eta^{-1}$ is expected to be worse than \mname. In summary, the order of generalization performance predicted by our theory is \mname $>$ $\{H\sim \eta^{-1}\}$ $>$ \{constant $H$\}. 

Experimental results in \Cref{fig:theory} validate that this order of generalization performance for different scalings holds not only for Local SGD but also for Local AdamW. For Local SGD we additionally have \{constant $H$\} $\approx $ \{parallel SGD\} since parallel SGD is mathematically equivalent to Local SGD with $H=1$. 
Apart from $H\sim \eta^{-1}$ and $H\sim \eta^{-2}$, we also tried a more aggressive scaling, $H \sim \eta^{-3}$, but it does not provide consistent improvements over QSR. See~\Cref{sec:power3} for more discussion.









\begin{figure}[t]
\vspace{-0.5in}
\begin{center}
    \subfigure[\small Local SGD on ResNet-152]{
    \includegraphics[width=0.36\textwidth]{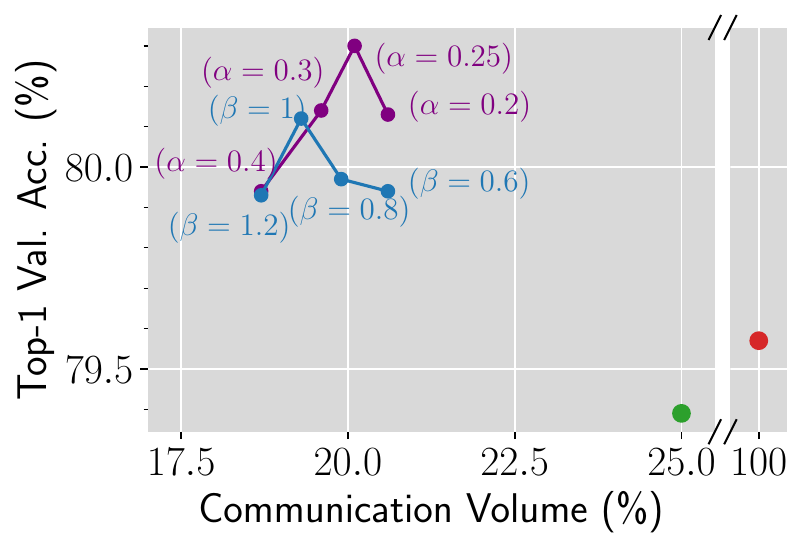}\label{fig:resnet_theory}
    }
    \subfigure[\small Local AdamW on ViT-B]{
        \includegraphics[width=0.51\textwidth]{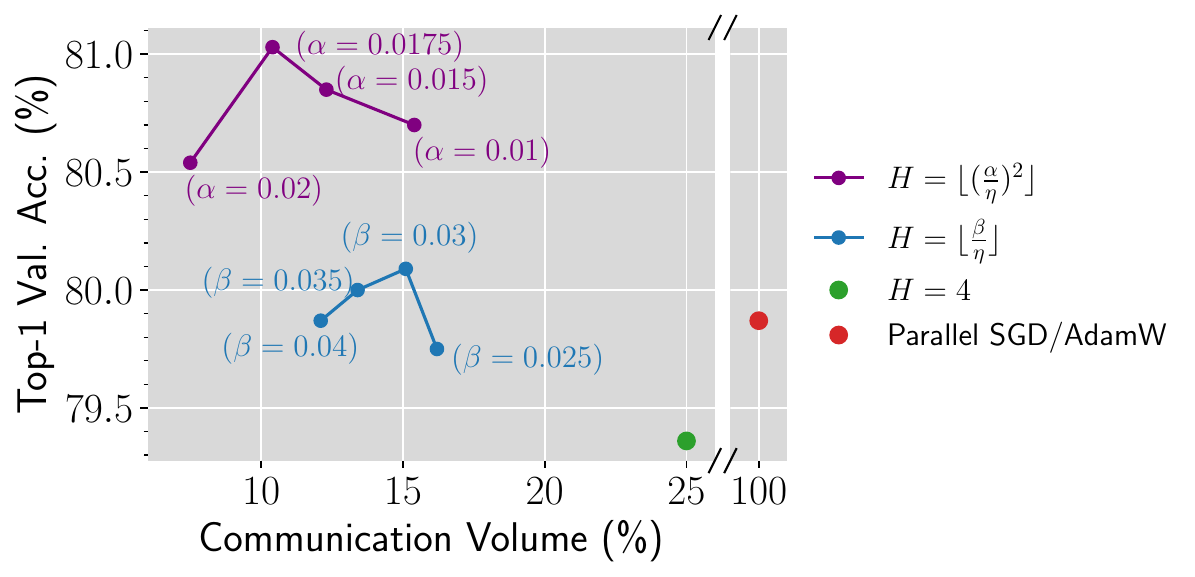}\label{fig:vit_theory}
    }
    \vspace{-0.15in}
\end{center}
 \caption{\small Empirical results on Local SGD and Local AdamW validate the generalization performance order predicted by our theory: \mname $>$ $\{H\sim \eta^{-1}\}$ $>$ \{constant $H$\}. For SGD, we additionally have \{constant $H$\} $\approx$ \{parallel SGD\} since the latter is equivalent to Local SGD with $H=1$. Here, $\alpha$ and $\beta$ are tuned to maximize the test accuracy of \mname and $H\sim \eta^{-1}$, respectively.}
        \label{fig:theory}
 \vspace{-0.15in}
\end{figure}




\vspace{-0.1in}
\section{Experiments}\label{sec:exp}

\vspace{-0.05in}
In this section, we empirically demonstrate that \mname not only improves the test accuracy of local gradient methods but also reduces the wall-clock time of standard data parallel training, with a focus on the ImageNet classification task \citep{imagenet}. Our experiments include Local SGD on ResNet-152 \citep{he2016deep}, and Local AdamW on ViT-B with patch size 16x16 \citep{dosovitskiy2021an}. We briefly outline our training configuration below. See \Cref{sec:exp details} for full details. 


\myparagraph{Baselines.} For \mname with base synchronization period $\constH$, we benchmark their performance against two baselines running the same number of epochs: \circled{1} Local SGD/AdamW with constant synchronization period $H=\constH$, and \circled{2} parallel SGD/AdamW.  
When comparing with these baselines, we mainly focus on validating that (a) \mname maintains or sometimes outperforms the communication efficiency of \circled{1}, thus communicating much less than \circled{2}, and (b) \mname improves the generalization performance of \circled{1}, even surpassing \circled{2}  in test accuracy. 

\myparagraph{Comparison with other synchronization strategies.} Besides the above two baselines, other potential baselines include \circled{3} Post-local SGD, \circled{4}  the scaling of $H\sim \eta^{-1}$, and \circled{5} large batch training with batch size $H\times B$, which we discuss below. 
\circled{3} is proposed for the same purpose as QSR: to improve communication efficiency and generalization together. However, it is less communication efficient than our QSR because it starts with parallel SGD and sustains this for a significant fraction of the training duration, leading to a limited reduction in communication. Also, as shown by our comparison in \Cref{fig:resnet_intro} (also observed in \citealt{ortiz2021trade}), its generalization benefits over SGD appear shortly after switching and diminish in the end.
\circled{4} is inspired by~\citet{gu2023why} and may also improve generalization while reducing communication, but we have conducted a thorough comparison between QSR and \circled{4} in~\Cref{fig:theory}, demonstrating the superiority of \mname.
\circled{5} has the same communication efficiency as Local SGD with the same constant $H$ (\circled{1}), but it has been observed to have worse test accuracy than parallel SGD/AdamW without scaling up the batch size (\circled{2}), which we also observe in~\Cref{table:large batch}.
For the above reasons, we mainly compare with baselines \circled{1} and \circled{2}.




\myparagraph{Hardware.} We conduct the experiments on Tencent Cloud, where each machine is equipped with 8 NVIDIA GeForce RTX 3090 GPUs. The machines are interconnected by a 25Gbps network. Since intra-machine communication speed is not substantially faster than inter-machine speed on our specific hardware, we treat \emph{each GPU} as an independent worker and set the batch size on each GPU as $\Bloc=256$. In this paper, we use $a$x$b$ GPUs to denote $a$ machines with $b$ GPUs each.

\myparagraph{Training Setup.} Our experiments on ResNet-152 follow the 200-epoch recipe in \cite{foret2021sharpnessaware} except that we use 5 epochs of linear learning rate warmup. For experiments on ViT-B, we follow the simple and effective 300-epoch recipe proposed in \cite{beyer2022betterplain} with RandAugment and Mixup. We use the cosine decay unless otherwise stated. The hyperparameters (primarily learning rate and weight decay) are optimally tuned for all baselines. We explore $\constH=2,4$ for ResNet-152 and $\constH=4,8$ for ViT-B. This choice stems from the observation that the communication overhead for ResNet-152 is smaller than ViT-B (see \Cref{table:time}). To tune the growth coefficient $\alpha$ for QSR, we first fix the learning rate schedule and then search among a few values of $\alpha$. The $\alpha$ values we explore typically allow the training to start with $\constH$, maintain $H=\constH$ for an initial period to optimize the training loss, and gradually increase $H$ as $\eta$ decays in the late phase. 

\vspace{-0.1in}
\subsection{\mname Improves Generalization}\label{subsec:generalization improvement}

\vspace{-0.05in}
Through experiments spanning various batch sizes and learning rate schedules, in this subsection, we illustrate that \mname consistently enhances the generalization of gradient methods, even outperforming the communication-intensive data parallel approach.


\myparagraph{Main results.} We first present our main results for batch size $B=4096$  on 2x8 GPUs, covering Local SGD on ResNet-152 and Local AdamW on ViT-B. As shown in \Cref{table:main result}, \mname significantly improves the validation accuracy of local gradient methods by up to $0.8\%$ on ResNet-152 and $1.7\%$ on ViT-B, despite inducing higher training loss. The results support the thesis that the improvement in generalization is due to the implicit regularization of local gradient noise instead of better optimization. Noticeably, QSR surpasses the data parallel approach in validation accuracy by $0.7\%$ on ResNet-152 and by $1.1\%$ on ViT-B while cutting the communication volume to less than $25\%$. As an added benefit of increasing the synchronization interval in line with the decaying learning rate, QSR further reduces communication overhead, even halving the communication volume compared to Local AdamW with a fixed synchronization period on ViT-B.

The advantages of \mname are more pronounced for ViT-B compared to ResNet-152. This is probably because vision transformers are general-purpose architectures with
less image-specific inductive bias than CNNs~\citep{dosovitskiy2021an,chen2021vision}. As a result, they may benefit more from external regularization effects, such as those induced by adding local steps.
\setlength{\tabcolsep}{1.5pt}
\noindent
\begin{table}[t]
\vspace{-0.4in}
    \caption{\small \mname enhances the test accuracy of local gradient methods, even outperforming the communication-intensive data parallel approach. The experiments below use batch size 4096. \added{We report the validation accuracy and train loss averaged over 3 runs, along with the standard deviation.}}\label{table:main result}
\begin{minipage}{0.485\textwidth}
    \centering
    \small
    \subfigure[Local SGD on ResNet-152]{
    \begin{tabular}{@{} c c c c @{}}
        \toprule
       Method & Val. acc. (\%) & Train loss & Comm.  \\
        \midrule
        Parallel SGD & 79.53 (0.07) & 1.57 (0.01) & 100\%\\
        \midrule[0.2pt]
        Local SGD ($H$=2) & 79.54 (0.07)& \textbf{1.58} (0.00) &50\%\\
        + QSR ($\constH$=2) & \textbf{80.30} (0.04) & 1.67 (0.01) & \textbf{39.7\%} \\
        \midrule[0.2pt]
        Local SGD ($H$=4) &79.48 (0.12) & \textbf{1.62} (0.02) &25\%\\
        + QSR ($\constH$=4) & \textbf{80.27} (0.05) & 1.69 (0.01) & \textbf{20.1\%} \\
        \bottomrule
    \end{tabular}
    }
\end{minipage}
\begin{minipage}{0.485\textwidth}
    \centering
    \small
    \subfigure[Local AdamW on ViT-B]{
    \begin{tabular}{@{} c c c c @{}}
        \toprule
        Method & Val. acc. (\%)&  Train loss &  Comm.  \\
        \midrule
        Parallel AdamW & 79.86 (0.03) & 1.09 (0.00) & 100\%\\
        \midrule[0.2pt]
        Local AdamW ($H$=4) & 79.32 (0.06)& \textbf{1.01} (0.02)  &25\%\\
        + QSR ($\constH$=4) & \textbf{80.98} (0.05) & 1.32 (0.00) & \textbf{10.4\%} \\
        \midrule[0.2pt]
        Local AdamW ($H$=8) &78.93 (0.10) & \textbf{1.06} (0.00) &12.5\%\\
        + QSR ($\constH$=8) & \textbf{80.56} (0.10) & 1.35 (0.01) & \textbf{6.9\%} \\
        \bottomrule
    \end{tabular}}
\end{minipage}
\end{table}
\setlength{\tabcolsep}{2pt}
\begin{table}
\vspace{-0.4in}
\small
    \caption{\small \mname mitigates the generalization degradation in large-batch training. Here the batch size is $16384$. }\label{table:large batch}

\vspace{-0.07in}
\begin{minipage}{0.5\textwidth}
    \centering
    \subfigure[Local SGD on ResNet-152]{
    \begin{tabular}{@{} c c c @{}}
        \toprule
       Method & Val. Acc.(\%) & Comm. (\%) \\
        \midrule
        Parallel SGD & 79.20 & 100\\
         \midrule[0.2pt]
        Local SGD ($H$=2) & 78.67&50\\
        + QSR ($\constH=2$)& \textbf{79.27} &\textbf{42.8}\\
       \midrule[0.2pt]
       Local SGD ($H$=4) &78.34&25\\
       + QSR ($\constH=4$) & \textbf{78.65} &\textbf{21.9}\\
        \bottomrule
    \end{tabular}
    }
\end{minipage}
\begin{minipage}{0.5\textwidth}
    \centering
    \subfigure[Local AdamW on ViT-B]{
    \begin{tabular}{@{} c c c @{}}
        \toprule
       Method & Val. Acc. (\%) &  Comm. (\%) \\
        \midrule
        Parallel AdamW & 78.52 & 100\\
        \midrule[0.2pt]
       Local AdamW ($H$=4) & 77.83&25\\
        +QSR ($\constH=4$) &\textbf{ 79.36}&\textbf{16.1}\\
       \midrule[0.2pt]
        Local AdamW ($H$=8) &77.62&12.5\\
        +QSR ($\constH=8$) & \textbf{78.26} &\textbf{9.8}\\
        \bottomrule
    \end{tabular}}
\end{minipage}
\vspace{-10pt}
\end{table}

\myparagraph{Scaling up the batch size.} 
In \Cref{table:large batch}, when scaling the training up to 8x8 GPUs with total batch size $B=16384$, we observe a drop in test accuracy for both data parallel approach and local gradient methods. This generalization degradation for large batch training, which has been widely observed in the literature \citep{shallue2019measuring,jastrzkebski2017three,you2018imagenet}, probably arises from a reduced level of gradient noise associated with increased batch size~\citep{keskar2017on,smith2021on}.  
While the Linear Scaling Rule for SGD \citep{krizhevsky2014one,goyal2017accurate} and the Square Root Scaling Rule \citep{malladi2022sdes,granziol2022learning} for adaptive gradient methods – which increase the learning rate in proportion to the total batch size or its square root – can mitigate this degradation, they cannot fully bridge the gap. In \Cref{table:large batch}, the test accuracy drop persists even when we tune the learning rate for all baselines.  Applying QSR to local gradient methods can help reduce this generalization gap. It improves the validation accuracy of local gradient methods by up to $0.6\%$ on ResNet-152 and $1.5\%$ on ViT-B. This enables local gradient methods to achieve comparable validation accuracy as the data parallel approach on ResNet or outperform it by $0.8\%$ on ViT while communicating considerably less.


\begin{wrapfigure}{r}{0.4\textwidth}
\vspace{-0.4in}
 \centering     \includegraphics[width=1\linewidth]{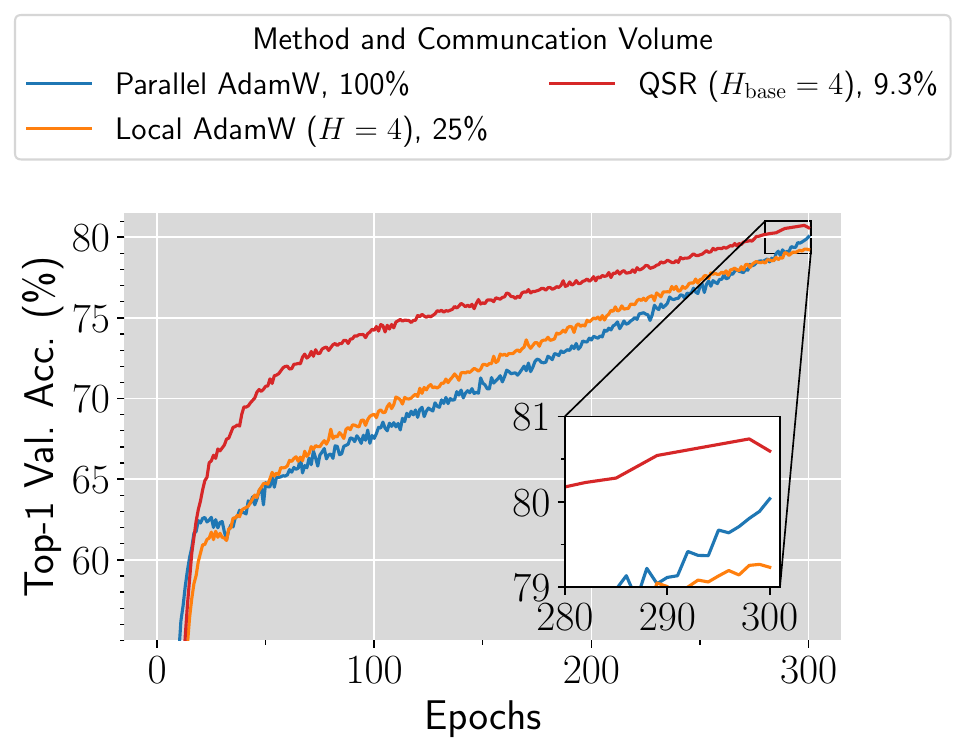} 
        \vspace{-10pt}
        \caption{\small For linear decay, \mname improves the test accuracy of Local AdamW on ViT-B, even outperforming the communication-intensive parallel AdamW.}
        \label{fig:linear} 

\vspace{-0.1in}
\end{wrapfigure}

\myparagraph{Other learning rate schedules. } So far, our experiments are conducted with the cosine learning rate schedule, which is a common choice for training modern deep neural nets \citep{liu2021swin,liu2022convnet,gpt}. 
To further validate the efficacy of \mname, we now investigate other popular learning rate schedules, including linear \citep{li2020budgeted,izsak2021train,Leclerc_2023_CVPR} and step decay \citep{he2016deep,huang2017densely,ma2019deep}. See \Cref{fig:schedule} for a visualization of these schedules. 
\Cref{fig:linear} presents the results for Local AdamW on ViT-B with linear decay, where the peak learning rates for baselines are tuned optimally. \mname improves the test accuracy of Local AdamW by a significant margin of $1.4\%$, even outperforming parallel AdamW by $0.6\%$ while cutting the communication volume to only $9.3\%$.  The step decay scheduler divides the learning rate by factors such as $2$ or $10$ at some specified epochs. Given the absence of standard recipes to determine the decay points in our training setup, we derive a step decay schedule from the cosine decay by rounding its learning rate to powers of $2$, which is defined as $\etastep(t):=2^{\mathrm{round}(\log_2 \etacos(t))}$. As shown in  \Cref{table:step decay}, \mname exhibits strong generalization performance with this decay schedule, enhancing the test accuracy of local gradient methods by up to $0.8\%$ on ResNet-152 and $1.5\%$ on ViT-B. It even surpasses the communication-intensive parallel SGD by $0.7\%$ on ResNet and parallel AdamW by $1\%$ on ViT.

\begin{table}[t]
\vspace{-0.03in}
    \small
\caption{\small \mname also exhibits strong generalization performance on the step-decay learning rate schedule.}\label{table:step decay}

\vspace{-0.07in}
\begin{minipage}{0.5\textwidth}
    \centering
    \subfigure[Local SGD on ResNet-152.]{
    \begin{tabular}{@{} c c c @{}}
        \toprule
       Method &  Val. Acc. (\%)& Comm. (\%) \\
        \midrule
        Parallel SGD & 79.68& 100\\
        \midrule[0.2pt]
        Local SGD ($H$=2) & 79.58&50\\
        +QSR ($\constH=2$) & \textbf{80.40} &\textbf{40.3}\\
       \midrule[0.2pt]
         Local SGD ($H$=4) &79.53&25\\
        +QSR ($\constH=4$) & \textbf{80.11} &\textbf{20.5}\\
        \bottomrule
    \end{tabular}\label{table:step resnet}
    }
\end{minipage}
\begin{minipage}{0.5\textwidth}
    \centering
    \small
    \subfigure[Local AdamW on ViT-B.]{
    \begin{tabular}{@{} c c c @{}}
        \toprule
       Method & Val. Acc.(\%)  &  Comm.(\%) \\
        \midrule
        Parallel AdamW & 79.91 & 100\\
        \midrule[0.2pt]
       Local AdamW ($H$=4) &79.36&25\\
        + QSR ($\constH=4$)  &\textbf{80.9}&\textbf{12.7}\\
       \midrule[0.2pt]
      Local AdamW ($H$=8) &79.23&12.5\\
       + QSR($\constH=8$)  &\textbf{80.65 }&\textbf{7.2}\\
        \bottomrule
    \end{tabular}\label{table:step vit}}
\end{minipage}
\vspace{-0.2in}
\end{table}

\begin{table}[t]
\vspace{-0.4in}
\caption{\small \mname reduces the wall-clock time of data parallel training. The following tables present wall-clock time for the entire training process on 2x8 GPUs and 8x8 GPUs, with batch sizes 4096 and 16384, respectively. We highlight the wall-clock time of \mname when it matches or outperforms the data parallel baseline in test accuracy. ``Ratio'' represents communication time divided by total time, reflecting the communication overhead. We also include local gradient methods with a constant synchronization period for reference. }\label{table:time}
\begin{minipage}{0.5\textwidth}
    \centering
    \small
    \subfigure[ResNet-152 (200 epochs) on 2x8 GPUs]{
    \begin{tabular}{@{} c  c c c @{}}
        \toprule
       Method & Comm. (h)& Total (h) & Ratio (\%)\\
        \midrule
        Parallel SGD & 3.3 & 20.7 &15.9 \\
        QSR ($\constH=2$)& 1.3 &\textbf{18.7}&7.0\\
        QSR ($\constH=4$) & 0.7 &\textbf{18.0}&3.9\\
         \midrule[0.2pt]
         Local SGD ($H$=2) & 1.6&19.0&8.4\\
       Local SGD ($H$=4) &0.8&18.0&4.4\\
        \bottomrule
    \end{tabular}
    }
\end{minipage}
\begin{minipage}{0.5\textwidth}
    \centering
    \small
    \subfigure[ViT-B (300 epochs) on 2x8 GPUs]{
    \begin{tabular}{@{} c c c c @{}}
        \toprule
       Method & Comm. (h) &  Total (h) & Ratio(\%)\\
       \midrule
       Parallel AdamW & 7.3 &26.7&27.3\\
       QSR ($\constH=4$) & 0.8&\textbf{20.2}&4.0\\
        QSR ($\constH=8$) & 0.5 &\textbf{20.0}&2.5\\
        \midrule[0.2pt]
       Local AdamW ($H$=4) &1.8 &21.2&8.4\\
       Local AdamW ($H$=8) &0.9 &20.5&4.4\\
        \bottomrule
    \end{tabular}}
\end{minipage}
\begin{minipage}{0.5\textwidth}
    \centering
    \small
    \subfigure[ResNet-152 (200 epochs) on 8x8 GPUs]{
    \begin{tabular}{@{} c  c c c @{}}
        \toprule
       Method & Comm. (h)& Total (h) & Ratio (\%)\\
        \midrule
        Parallel SGD & 1.3 & 5.7 &22.8\\
          QSR ($\constH=2$)& 0.6 &\textbf{5.0}& 12.0\\
        QSR ($\constH=4$) & 0.3 &4.7&6.4\\
       \midrule[0.2pt]
        Local SGD ($H$=2) & 0.7&5.1&13.7\\
       Local SGD ($H$=4) &0.3&4.8&6.3\\
        \bottomrule
    \end{tabular}
    }
\end{minipage}
\begin{minipage}{0.5\textwidth}
    \centering
    \small
    \subfigure[ViT-B (300 epochs) on 8x8 GPUs]{
    \begin{tabular}{@{} c c c c @{}}
        \toprule
       Method & Comm. (h) &  Total (h) & Ratio (\%)\\
        \midrule
        Parallel AdamW &3.7 &8.6&43.0\\
        QSR ($\constH=4$) & 0.6&\textbf{5.5}&10.9\\
        QSR ($\constH=8$) & 0.4 &5.3&7.5\\
       \midrule[0.2pt]
        Local AdamW ($H$=4) &0.9 &5.8&15.5\\
       Local AdamW ($H$=8) &0.5 &5.3&9.4\\
        \bottomrule
    \end{tabular}}
\end{minipage}
\vspace{-0.2in}
\end{table}

\vspace{-0.1in}
\subsection{\mname Reduces Wall-clock Time}\label{subsec:wall clock}

\vspace{-0.05in}
In addition to improving generalization, our original motivation for adopting local steps is to reduce communication overhead and hence reduce the wall-clock time.
In this section, we confirm this for training with 2x8 and 8x8 GPUs, as shown in \Cref{table:time}. See also \Cref{sec:wall-clock detail} for our method of measuring the communication time. 
In our setup, scaling the training from 2x8 to 8x8 GPUs increases the communication overhead for both models. Notably, on 8x8 GPUs, communication accounts for almost half of the total training time for ViT-B. Since communication makes up a larger portion of the total time for ViT-B compared to ResNet-152, the speedup from \mname is more significant on ViT-B: the time is cut from 26.7 to 20.2 hours on 2x8 GPUs, and 8.6 to 5.5 hours on 8x8 GPUs. As discussed in \Cref{subsec:generalization improvement}, compared to the constant period local gradient method, \mname further reduces the communication cost by increasing the synchronization period in the late phase. For example, applying \mname to Local AdamW with $H=4$ further reduces the time by 1 hour for ViT training on 2x8 GPUs. 

\myparagraph{Discussion on the choice of $\constH$.} As elaborated in \Cref{sec:method}, $\constH$ indicates the minimum synchronization period and should be determined based on the communication overhead. For ResNet-152, given that communication only accounts for 3.3 out of 20.7 hours on 2x8 GPUs and 1.3 out of 5.7 hours on 8x8 GPUs, setting $\constH$ as $2$ or $4$ suffices to reduce the communication time to an inconsequential amount. By contrast, the communication overhead for ViT-B is more prominent, motivating us to consider larger values of $\constH$, such as 4 and 8. As shown in \Cref{table:main result,table:large batch}, $\constH$ introduces a tradeoff between communication efficiency and final test accuracy. For instance, when training ResNet-152 with batch size 16384, one can either choose $\constH=2$ to achieve comparable test accuracy as parallel SGD, or $\constH=4$ to further halve the communication volume at the expense of a $0.6\%$ drop in test accuracy. One probable explanation for this accuracy drop for larger $\constH$ can be worse optimization in the early training phase, where the learning rate is large.

\vspace{-0.2in}
\section{Discussions and Future Directions}\label{sec:conclusion}

\vspace{-0.15in}
This paper primarily focuses on relatively large models trained with long horizons, and proposes the Quadratic Synchronization Rule (QSR).
As validated by our experiments, QSR effectively improves test accuracy and communication efficiency simultaneously for training large vision models (ResNet-152 and ViT-B) with quite a few hundred epochs. However, on the downside, for smaller models trained with shorter horizons, \mname may not consistently deliver noticeable generalization improvements (see \Cref{table:resnet50}). Nonetheless, training in this regime is not costly, either, making it less of a critical concern.
Another limitation of our work is that the effectiveness of QSR relies on the implicit regularization effects of noise, but regularization techniques become less important in bridging the gap between the training and population loss~\citep{vyas2023onlinesgd} in pertaining large model with unsupervised learning, where the training is done on massive data with only a few epochs.
Still, certain implicit/explicit regularization effects have been found to be effective in improving downstream performance despite the same pertaining loss~\citep{liu2023same,panigrahi2024efficient}.
We leave it to future work to explore and design communication-efficient methods for unsupervised learning, particularly language model pretraining, that improve models' transferability to downstream tasks.


\section*{Acknowledgement and Disclosure of Funding}
The work of Xinran Gu and Longbo Huang is supported by the Technology and Innovation Major Project of the Ministry of Science and Technology of China under Grant 2020AAA0108400 and 2020AAA0108403.
The work of Kaifeng Lyu and Sanjeev Arora is partly supported by NSF and ONR.
\bibliography{reference}
\bibliographystyle{iclr2024_conference}
\newpage
\tableofcontents
\newpage
\appendix
\section{Additional Related Works}
\myparagraph{Advances in local gradient methods.} Local gradient methods are a class of communication-efficient algorithms for distributed training. In this approach, workers update their models locally and average the model parameters every time they finish $H$ steps of updates. Dating back to \citet{DBLP:conf/nips/MannMMSW09} and~\citet{NIPS2010_abea47ba}, local gradient methods have been widely used to improve communication efficiency in both datacenter distributed training \cite{acoustic,povey2014parallel,su2015experiments,chen2016} and Federated Learning \citep{kairouz2021advances, mcmahan2017communication, li2019convergence, konevcny2016federated}. Many variants have been proposed to facilitate the convergence speed. Examples include using control variates \citep{karimireddy2020scaffold}, adding proximal terms to local loss functions \citep{li2020fedprox}, and applying adaptivity on top of each communication round \citep{wang2019slowmo,reddi2020adaptive}. Local gradient methods can also be readily combined with orthogonal approaches like communication compression \citep{NEURIPS2019_Qsparse} and asynchronous updates \citep{nadiradze2021asynchronous} for further communication cost reduction.

\myparagraph{Optimization perspectives on selecting $H$.} 
Extensive prior research has been devoted to optimizing the selection of the synchronization period $H$ from an optimization perspective. The conventional approach sets $H$ as a constant throughout training. In this setup, a series of studies (e.g.,\cite{khaled2020tighter,stich2018local,haddadpour2019local,yu2019parallel}) established convergence bounds for the training loss, which typically degrade as $H$ gets larger.  leading to a trade-off between communication efficiency and model accuracy. Drawing upon these theoretical results, $H$ should be set as the smallest value that reduces the communication cost to an acceptable level to minimize the negative impact on optimization. To better trade-off between optimization and generalization, researchers introduced various adaptive communication strategies. \cite{kamp2014communication} designed a synchronization protocol controlled by the variance in model parameters. \cite{haddadpour2019local} suggested linearly increasing $H$ as the iteration goes on. \cite{shen2021stl} introduced a stagewise communication scheme that halves the learning rate $\eta$ while doubles $H$ every time the training has finished a predefined stage. Aimed at optimizing the convergence of training loss with respect to wall-clock time, \cite{wang2019adaptive} proposed a strategy that starts with infrequent communication and gradually decreases $H$ as training progresses. Nonetheless, the effectiveness of these adaptive communication strategies has only been empirically validated on linear models or small-scale datasets like CIFAR-10/100.

\myparagraph{Generalization perspectives on selecting $H$.} While a larger $H$ usually hurts optimization, it can sometimes improve generalization. Apart from \cite{lin2020dont} that has been discussed in detail in \Cref{sec:intro}, similar observations have been reported by \cite{gupta2020swap} and \cite{wortsman2023lofi}. Specifically, \cite{gupta2020swap} introduced the Stochastic Weight Averaging in Parallel (SWAP) algorithm, which runs parallel SGD until a target training accuracy, then lets workers perform local updates with a final model averaging. Their empirical results validate SWAP's superior generalization performance over parallel SGD. When using LAMB \citep{you2020large} as the optimizer, \cite{wortsman2023lofi} find that complete local fine-tuning, followed by a single model averaging in the end (equivalent to setting $H$ as the total number of iterations), outperforms the standard parallel LAMB in test accuracy under distribution shifts. Another relevant method is the ``model soup'' \citep{wortsman2022modelsoup}, which averages multiple models fine-tuned with different hyperparameters and turns out to beat the single model in test accuracy. Our paper focuses on designing the synchronization scheme best for generalization.

\paragraph{Implicit bias of optimizers.} 
The success of deep learning lies in its remarkable ability to generalize to unseen data, though it possesses the capacity to fit randomly labeled data \citep{zhang2017rethinking}. A significant contributing factor to this success is the implicit bias inherent in popular optimizers like Gradient Descent (GD) and Stochastic Gradient Descent (SGD). Specifically, these optimizers favor minima that exhibit good generalization, without explicitly encoding such bias into the training loss.  A lot of studies have been devoted to characterizing this implicit bias, some through the lens of margin maximization~\citep{soudry2018iclrImplicit,soudry2018implicit,lyu2020gradient,ji2020directional,chizat20logistic,nacson2019lexicographic}, and some others focus on the simplicity bias from small initialization~\citep{li2018algorithmic,razin2020implicit,arora2019implicit,li2021towards,lyu2021gradient,razin2022implicit,stoger2021small,ge2021understanding,jin2023understanding}. The line of work most closely related to our paper interprets the implicit bias via sharpness reduction. The connection between flatter minima and better generalization is a commonly held belief that has been investigated both theoretically~\citep{hochreiter1997flat,neyshabur2017exploring} and empirically~\citep{keskar2017large,jiang2020fantastic}. Drawing on this insight, \cite{foret2021sam} introduced SAM optimizer, which delivers superior generalization performance by explicitly penalizing sharpness. Recent theoretical studies \citep{arora2022understanding,lyu2022understanding,damian2023selfstabilization,ma2022multiscale} elucidate that GD inherently biases towards flatter regions on the loss landscape. Specifically, under some regularity conditions, they show that GD will eventually enter the ``Edge of Stability''\citep{cohen2020gradient}, where the maximum eigenvalue of the loss Hessian stays around $2$/learning rate, and then constantly moves towards flatter minima. Going beyond GD, another line of work studies how gradient noise in SGD helps reduce sharpness. \citet{wu2018how,hu2017diffusion,ma2021on} showed that gradient noise
can cause training instability around sharp minima, and hence, the iterate can only settle around flat minima. \citet{kleinberg2018alternative,zhu2019anisotropic,xie2021a,ibayashi2022expescape} analyzed the escaping behavior of SGD from sharp minima. Motivated by recent empirical observations that low-loss solutions on the loss landscape are  path-connected~\citep{garipov2018loss, draxler2018essentially, frankle2020linear} rather than isolated, \citet{blanc2020implicit,damian2021label,li2021happens} assume the existence of a minimizer manifold and show that gradient noise provably drives the iterate towards flatter minima on this manifold. 
\citet{cowsik2022flatter,wang2023marginal} discuss how momentum preserves or strengthens this effect.
Also through the lens of sharpness reduction, 
the recent work by \citet{gu2023why} explains the generalization benefit of Local SGD, as discussed in~\Cref{sec:derivation}.
\citet{zhu2023decentralized} elucidate that a similar implicit bias also manifests in decentralized training by making connections to certain variants of SAM. 


\newpage
\section{PseudoCode} \label{sec:code}
We present the pseudocodes for standard data parallel methods and local gradient methods below.
\begin{add}
\begin{algorithm}[H]
    \caption{\texttt{Parallel OPT}: Data Parallel Methods on $K$ Workers}\label{algo:parallel}
    \textbf{Input}: loss function $\ell(\vtheta; \xi)$, initial parameter $\vths[0]$ \\
    \textbf{Hyperparameters}: total number of iterations $T$ \\
    \textbf{Hyperparameters}: learning rate schedule $\eta_t$, $t\in \set {0, \cdots, T}$, local batch size $\Bloc$ \\
    \BlankLine
    \BlankLine
    $t \gets 0$ \tcp*[r]{initialize the global iteration number}
    \For{$t=0, \dots, R-1$} {
        \ForP{each worker $k$} {
                $(\xis_{k,t,1}, \dots, \xis_{k,t,\Bloc}) \gets \fnSample{}$  \tcp*[r]{sample a local batch}
                $\vgs[t]_{k}\gets \frac{1}{\Bloc}\sum_{i=1}^{\Bloc} \nabla \ell (\vths[t]; \xis[t]_{k, i})$ \tcp*[r]{computing the local gradient}
        }
        $\vgs[t] \gets \frac{1}{K} \sum_{k=1}^K \vgs[t]_k$ \tcp*[r]{All-Reduce aggregation of local gradients}
        $\vths[t+1] \gets \texttt{OPT}(\vths[t], \eta_t, \vgs[t])$ \tcp*[r]{update the model with optimizer OPT}
    }
    \BlankLine
    \BlankLine
\end{algorithm}
\end{add}

\begin{algorithm}[H]
    \caption{\texttt{Local OPT}: Local Gradient Methods on $K$ Workers}\label{algo:lopt}
    \textbf{Input}: loss function $\ell(\vtheta; \xi)$, initial parameter $\bvths[0]$ \\
    \textbf{Hyperparameters}: total number of rounds $R$ \\
    \textbf{Hyperparameters}: learning rate schedule $\eta_t$, $t\in \set {0, \cdots, T}$, local batch size $\Bloc$ \\
    \BlankLine
    \BlankLine
    $t \gets 0$ \tcp*[r]{initialize the global iteration number}
    \For{$s=0, \dots, R-1$} {
        $\Hs \gets \fnGetH(s)$ \tcp*[r]{get synchronization period for the current round}
        \ForP{each worker $k$} {
            $\vths_{k,0} \gets \bvths[0]$ \tcp*[r]{maintain a local copy of the global model parameter}
            \For{$h = 0, \dots, \Hs - 1$} {
                $(\xis_{k,h,1}, \dots, \xis_{k,h,\Bloc}) \gets \fnSample{}$  \tcp*[r]{sample a local batch}
                $\vgs_{k, h}\gets \frac{1}{\Bloc}\sum_{i=1}^{\Bloc} \nabla \ell (\vths_{k,h}; \xis_{k,h, i})$ \tcp*[r]{computing the local gradient}
                $\vths_{k,h+1} \gets \texttt{OPT}(\vths_{k,h},\eta_{t+h}, \vgs_{k,h})$ \tcp*[r]{update the local model with optimizer OPT}
            }
        }
        $\bvths[s+1] \gets \frac{1}{K} \sum_{k=1}^K \vths_{k,\Hs}$ \tcp*[r]{All-Reduce aggregation of local model parameters}
        $t \gets t + \Hs$ \tcp*[r]{update the global iteration number}
    }
    \BlankLine
    \BlankLine
\end{algorithm}
\begin{add}
\myparagraph{Sampling local batches.} In \Cref{algo:parallel,algo:lopt}, $\fnSample()$ returns a local batch for each worker. In our experiments, local batches are sampled without replacement at each epoch, which is standard for distributed training \citep{goyal2017accurate,lin2020dont,ortiz2021trade}. 
More specifically, at the beginning of each epoch, all the workers use the same random seed to draw a shared random permutation of train data points, and partition the data points evenly among the $K$ workers. Then at each local step of each worker, $\fnSample()$ sequentially takes samples from its own partition. Once there are too few remaining samples to form a complete batch, a new permutation is sampled and a new epoch starts.
For our theoretical analysis, 
following~\citet{gu2023why}, we assume $\fnSample()$ takes samples with replacement, i.e., the $K$ workers are taking i.i.d.~samples from the globally shared dataset/distribution. See Appendix B in \citet{gu2023why} for pseudocodes of sampling with and without replacement.

\myparagraph{Setting synchronization periods.} 
In \Cref{algo:lopt}, $\fnGetH(s)$ is a function that returns the synchronization period $\Hs$ for the current round.
Conventionally, $\Hs$ is chosen as a fixed value, so $\fnGetH(s)$ always returns a constant.
In this paper, we study how $\Hs$ should change as training goes on, e.g., in QSR, $\fnGetH(s)$ works as specified in~\Cref{sec:method}.

\end{add}

\section{Experimental Details}\label{sec:exp details}

\begin{wrapfigure}{r}{0.45\textwidth}
 \centering
        \vspace{-0.2in}
        \includegraphics[width=1\linewidth]{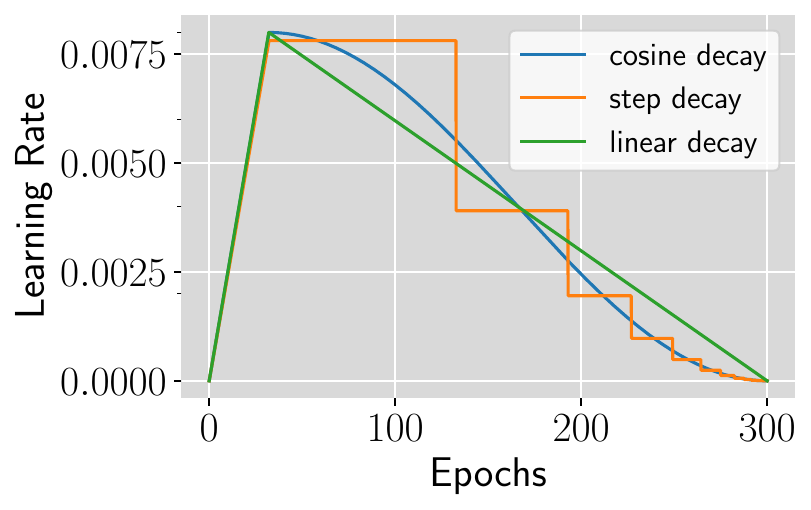} 
        \caption{\small A visualization of the learning rate schedules we investigate.}
        \label{fig:schedule} 
        \vspace{-0.6in}
\end{wrapfigure}
This section lists the additional experimental details omitted in the main text. 
\myparagraph{Software and platform.} We use Pytorch Distributed with NCCL backend to support multinode distributed training and use FFCV \citep{leclerc2022ffcv} to accelerate data loading of ImageNet.  
\myparagraph{Sampling scheme.} We employ the ``sampling without replacement'' scheme, as described in~\Cref{sec:code}.

\subsection{Training Details for ResNet-152}
We generally follow the recipe in \cite{foret2021sharpnessaware} to train ResNet-152. Specifically, we set the momentum as $0.9$ and the weight decay $\lambda$ as $0.0001$. For data augmentation, we employ random resized crop and random horizontal flip. We additionally use label smoothing $0.1$. We adopt a local batch size $\Bloc=256$ through $8$ gradient accumulations. Therefore, the batch size for BatchNorm is $32$. This choice stems from our observation that a smaller batch size for BatchNorm enhances the test accuracy of parallel SGD. Since the BatchNorm statistics on each worker are estimated on the local model parameter, we pass 100 batches, each of size 32, to estimate the BatchNorm statistics on the global parameter before evaluation.

\myparagraph{Training details for batch size 4096.} We search the optimal peak learning rate $\etamax$ of the cosine learning rate schedule among $\set{0.4, 0.8, 1.6}$ for all baseline algorithms, i.e., parallel SGD and Local SGD with constant synchronization period $H=2$ and $H=4$. The learning rate yielding the highest final test accuracy is selected. We find that $\etamax=0.8$ is optimal for all the baseline algorithms. For \mname with $\constH=2$ and $\constH=4$, we directly set $\etamax=0.8$. We search $\alpha$ among $\set{0.2, 0.25, 0.3}$, and choose $\alpha=0.2$ and $0.25$ for \mname with $\constH=2$ and $4$ respectively. Regarding other communication strategies in \Cref{fig:resnet_intro}, we set the switching point at epoch 100 and employ $H=8$ for Post-local SGD. For $H= \beta / \eta$, we search $\beta$ among $\set{0.6, 0.8, 1, 1.2}$, finally selecting $\beta=1$. 
%

\myparagraph{Training details for batch size 16384.}The hyperparameter tuning procedure for $B=16384$ is similar to that of $B=4096$. We search $\etamax$ among $\set{0.8, 1.6, 3.2}$ for all baseline algorithms, including SGD and Local SGD with constant synchronization period $\constH=2$ and $\constH=4$. We find that $\etamax=3.2$ yields the highest final test accuracy for all of them. However, for \mname, we find that peak learning rate $\etamax=3.2$ is excessively large, causing the dynamic scheduling to be triggered too late in the training process. This late triggering leaves insufficient training time for the training to fully leverage the generalization benefits introduced by local steps. Consequently, we set $\etamax=1.6$ for \mname with  $\constH=2$ and $4$. We search $\alpha$ among $\set{0.2, 0.25, 0.3}$, and choose $\alpha=0.2$ for both \mname with $\constH=2$ and $4$. 
\myparagraph{Training details for the step decay scheduler.} In our experiments with step decay, we employ a batch size of 4096. Given that our step decay scheduler is derived from the cosine decay, we only need to specify the weight decay \(\lambda\), and peak learning rate \(\etamax\). These are set identically to the values used in our cosine decay experiments. For \mname, we search the growth coefficient $\alpha$ among $\set{0.2,0.3}$ and choose $0.2$ for both $\constH=2$ and $4$. 
\myparagraph{Training details for experiments in \Cref{sec:swap}. } For Local SGD + SWAP experiments in \Cref{fig:swap-resnet}, we use the cosine learning rate schedule with peak learning rate $\etamax=0.8$. We start with Local SGD with a constant synchronization period $H=4$ and explore the switching point $t_0$ from $\set{175, 180, 185, 190}$.
\newpage
\subsection{Training Details For ViT-B}\label{subsec:detail vit}
\begin{wrapfigure}{r}{0.45\textwidth}
 \centering
        \includegraphics[width=1\linewidth]{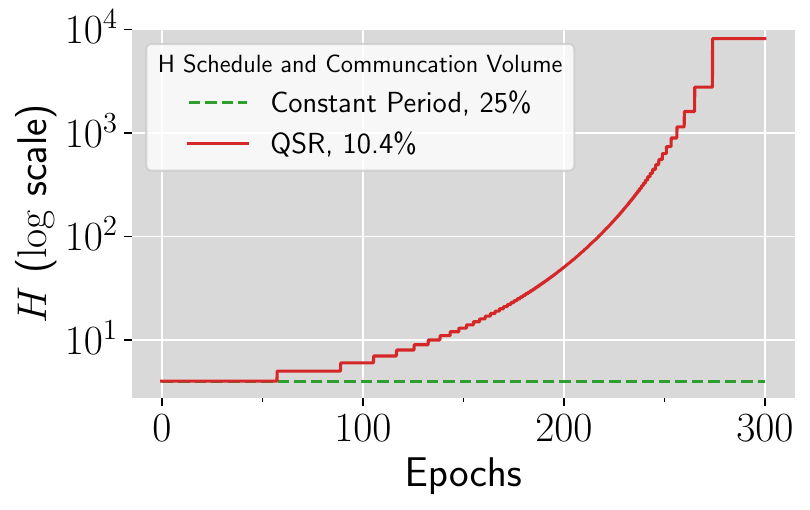} 
        \caption{\small \small A visualization of the $H$ schedule for Local AdamW with a constant synchronization period $H=4$ and with \mname $\constH=4, \alpha=0.0175$. The corresponding learning rate schedule is cosine decay with a peak learning rate of $0.008$. Adopting \mname improves the top-1 validation accuracy of Local AdamW on ViT-B from $79.32\%$ to $80.98\%$..}
        \label{fig:h-sche} 
        \vspace{-0.1in}
\end{wrapfigure}
For training ViT-B, we primarily follow the 300-epoch recipe proposed by \cite{beyer2022betterplain}. Specifically, we replace the [cls] token of the original ViT token with global average pooling and use fixed 2D sin-cos position rather than learned positional embeddings. Our implementation of the model architecture follows the high-starred repository \footnote{\href{https://github.com/lucidrains/vit-pytorch}{https://github.com/lucidrains/vit-pytorch}} by Phil Wang. Apart from random resized crop and random horizontal flip, we employ RandAugment with parameters (2, 10) and MixUp with a coefficient of 0.2 for data augmentation. Different from \cite{beyer2022betterplain}, we use a larger batch size ($B=4096$ or $16384$ as opposed to their $1024$) and use AdamW instead of Adam.  

As for gradient clipping, we set it as $1$ for standard AdamW following \cite{beyer2022betterplain, dosovitskiy2021an} and \cite{chen2021vision}. However, for Local AdamW, the smaller batch size locally leads to larger gradient noise and, hence larger gradient norm for local updates. This calls for an increase in the gradient clipping threshold.  We find that the training process remains stable even when we remove gradient clipping (equivalent to setting the clipping threshold to $+\infty$) for most of the hyperparameter configurations we tested. For ease of tuning, we choose to turn off gradient clipping for Local AdamW unless otherwise stated.

\myparagraph{Training details for batch size 4096.} We use 10k iterations for learning rate warmup following \citep{beyer2022betterplain,dosovitskiy2021an,chen2021vision}. For parallel AdamW and Local AdamW ($H=4$), we explore combinations of $\etamax$ and weight decay $\lambda$ from the grid $\set{0.05, 0.1}\times \set{0.004, 0.008, 0.016}$. To optimize the final test accuracy, we select $\etamax=0.008, \lambda=0.1$ for parallel AdamW and $\etamax=0.008, \lambda=0.05$ for Local AdamW ($H=4$). For Local AdamW ($H=8$), keeping $\lambda=0.05$, we conduct a grid search for $\etamax$ among $\set{0.004, 0.008, 0.016}$ and choose $\etamax=0.008$. For \mname with $\constH=4$ and $8$, we directly use $\etamax=0.008$ and $\lambda=0.05$. To optimize $\alpha$, we search among $\set{0.015, 0.0175, 0.02}$ and find $\alpha=0.0175$ works best for both \mname with $\constH=4$ and $8$. Regarding the communication strategy of $H=\beta/\eta$ in \Cref{fig:vit_intro}, we explore $\beta$ among $\set{0.025, 0.03, 0.035, 0.04}$, settling on $\beta=0.03$. In \Cref{fig:h-sche}, we also visualize the $H$ schedule for Local AdamW with a constant synchronization period and with \mname. 

\myparagraph{Training details for batch size 16384.} To keep the same portion of the total budget for learning rate warmup as $B=4096$, we set the warmup iterations to 2.5k. We set $\lambda$ as $0.1$ and  $0.05
$ for parallel AdamW and Local AdamW, respectively.  We search for the optimal $\etamax$ among $\set{0.004, 0.008, 0.016}$ and select $\etamax=0.004$ for parallel AdamW, $\etamax=0.016$ for Local AdamW with $H=4$ and $8$. We adopt the same $\lambda$ and $\etamax$ as Local AdamW for \mname. For \mname with $\constH=4$, we search for the optimal $\alpha$ among $\set{0.015, 0.0175, 0.02}$ and choose $\alpha=0.0175$. For \mname with $\constH=8$, we search for the optimal $\alpha$ among $\set{0.01, 0.0175}$, finally picking $\alpha=0.01$. 

\myparagraph{Training details for linear and step decay schedulers.} For both step and linear decay schedulers, we employ a batch size of 4096. For the step decay scheduler, the peak learning rate $\etamax$ and weight decay $\lambda$ are set identically to the values used in our cosine decay experiments. We search the growth coefficient $\alpha$ for \mname among $\set{0.015, 0.0175}$ and choose $0.015$ for both $\constH=4$ and $8$. For linear decay, we use the same weight decay as our cosine decay experiments. We explore $\etamax$ values from $\set{0.004, 0.008, 0.016}$ for baselines, finally picking $\etamax=0.008$ for parallel AdamW and $\etamax=0.016$ for Local AdamW. For \mname, we adopt the same $\etamax$ and $\alpha$ as in our cosine decay experiments. Additionally, we add a gradient clipping threshold of $4$ for Local AdamW with a constant synchronization period to stabilize training.

\myparagraph{Training details for experiments in \Cref{sec:power3}. } For the experiments in \Cref{table:beat power3}, we employ the same weight decay $\lambda$ and peak learning rate $\etamax$ as used in the cosine schedule. Specifically, we set $\lambda=0.1, \etamax=0.008$ for parallel AdamW and $\lambda=0.05, \etamax=0.008$ for Local AdamW. In \Cref{table:smith step}, for the cubic rule, we search $\rho$ among $\set{0.0025, 0.005, 0075, 0.01}$ and opt for $\rho=0.0075$, which gives the highest test accuracy. For \mname, we adopt the same $\alpha$ value, 0.0175,  as in our cosine decay experiments. In \Cref{table:modified cosine}, we set $\rho=0.0075$ and $\alpha=0.0175$ for the cubic rule and \mname, respectively, which are optimal for the original cosine decay schedule, as indicated by \Cref{fig:power3}. As mentioned in \Cref{sec:method}, the final synchronization period may be truncated. Specifically, workers are forced to synchronize at the last iteration if the last synchronization period exceeds the remaining iterations.  However, the modified cosine schedule experiments seek to validate that the cubic rule can produce an overly large $H$ when the learning rate is constant. To prevent the truncation from distorting the results, we present the test accuracy at the conclusion of the last full synchronization period, which is not truncated, for both scalings.

\myparagraph{Training details for experiments in \Cref{sec:swap}. } For Local AdamW + SWAP experiments in \Cref{fig:swap-vit}, we use the cosine learning rate schedule with peak learning rate $\etamax=0.008$ and weight decay $\lambda=0.05$. We start with Local AdamW with a constant synchronization period $H=4$ and explore the switching point $t_0$ from $\set{220, 240, 260, 280}$.




\section{Additional Experiments on ResNet-50}
Our paper primarily focuses on training relatively large models with long horizons and proposes \mname to effectively improve the generalization while saving communication. However, on the flip side, \mname may not always yield noticeable generalization benefits for smaller models trained with shorter horizons. 
As shown in \Cref{table:resnet50}, for the 90-epoch training of ResNet-50 with cosine learning rate decay, the generalization benefit of \mname over Local SGD with a constant communication period is negligible. Nonetheless, training in this regime is not costly, either, making it less of a critical concern. Specifically, completing this 90-epoch training of ResNet-50 requires only 6.6 hours on a single machine equipped with 8 NVIDIA GeForce RTX 3090 GPUs. In comparison, the 300-epoch training of ViT investigated in the main text necessitates over 50 hours on the same setup.

\setlength{\tabcolsep}{1.5pt}
\begin{table}[H]
\hfill
 \begin{minipage}{0.6\textwidth}
\centering
\caption{\small \mname does not yield noticeable improvement in test accuracy for the 90-epoch training of ResNet-50.}\label{table:resnet50}
\begin{tabular}[t]{@{} c c  @{}}
        \toprule
       Method & Val. Acc. (\%)  \\
        \midrule
        Parallel SGD & 76.84 \\
        \midrule[0.2pt]
        Local SGD ($H=2$) & 76.60\\
        +\mname($\constH=2$) & 76.65\\
        \bottomrule
    \end{tabular}
    \end{minipage}
    \hfill
    \strut
\end{table}

\newpage
\section{Supplementary Materials for Section~\ref{sec:derivation}}

\subsection{Missing Definitions and Assumptions}
For a function $F:\R^d\to \R^d$, we use $\partial F(\vtheta)$ to denote its Jacobian at
$\vtheta$ and use $\partial^2 F(\vtheta)$ to denote the second order derivative at $\vtheta$.
For any matrix $\mM\in \R^{d\times d}$, $\partial^2
F(\vtheta)[\mM]=\sum_{i\in[d]}\inne{ \frac{\partial^2 F_i}{\partial
\vtheta^2}}{\mM}\ve_i$ where $\ve_i$ is the $i$-th vector of the standard
basis. For convenience, we write $\partial^2 (\nabla \cL)(\vtheta)[\mM]$ as $\nabla^3 \cL(\vtheta)[\mM]$.

\begin{assumption}\label{assumption:l}
Following~\citet{gu2023why},
we assume that $\cL(\vtheta)$ and $\mSig(\vtheta)^{1/2}$ are $\cC^{\infty}$-smooth on $\R^d$.
We also assume that $\normtwo{\nabla \ell(\vtheta; \xi)}$ is uniformly bounded for all $\vtheta$ and $\xi$. 
\end{assumption}
\begin{assumption}\label{assumption:manifold}
    $\Gamma$ is a $\cC^{\infty}$-smooth, $(d-m)$-dimensional compact submanifold of $\R^d$
    such that any $\vzeta \in \Gamma$ is a local minimizer of $\cL$
    and $\rank(\nabla^2 \cL(\vzeta)) = m$. Additionally,
    there exists an open neighborhood $U$ of $\Gamma$
    such that $\Gamma = \mathrm{argmin}_{\vtheta \in U} \cL(\vtheta)$.
\end{assumption}
  \Cref{assumption:manifold} is motivated by recent empirical observations that low-loss solutions on the loss landscape are not isolated but path-connected~\citep{garipov2018loss, draxler2018essentially, frankle2020linear}. It is also adopted by \cite{li2021happens,lyu2022understanding,gu2023why}.

\begin{definition}[Gradient Flow Projection]\label{def:gf projection}
    Fix $\vthetanull \notin \Gamma$. 
    For $\vx \in \R^d$, 
    the gradient flow starting from $\vx$
    is the solution to $\frac{\dd \vx(t)}{\dd t} = -\nabla
    \cL(\vx(t))$ with the initial condition$\vx(0)=\vx$.
   The gradient flow projection of $\vx$
    is defined as $\Phi(\vx) := \lim_{t\to +\infty} \vx(t)$
    if the limit exists and belongs to $\Gamma$.
    Otherwise, $\Phi(\vx) := \vthetanull$.
\end{definition}
\begin{definition}[Slow SDE for SGD, formal] \label{def:slow-sde-sgd-formal}
   Given $\vzeta_0 \in \Gamma$, define $\vzeta(t)$ as the solution to the following SDE with initial condition $\vzeta(0)=\vzeta_0$:
    \begin{align}\label{eq:sgd-zeta-formal}
    \dd \vzeta(t)&=P_{\vzeta}\Big(\underbrace{\tfrac{1}{\sqrt{B}} \mSig_{\parallel}^{\sfrac{1}{2}}(\vzeta)\dd \vW_t}_{\text{(a)\ diffusion on $\Gamma$}}
    \underbrace{-\tfrac{1}{2B} \nabla^3 \cL(\vzeta)[\widehat{\mSig}_{\Diamond}(\vzeta)] \dd t}_{\text{(b)\ drift on $\Gamma$}}
    \Big).
    \end{align}
Here, for any $\vzeta \in \Gamma$, $P_{\vzeta}$ is a projection operator that maps any differential form $\mA \dd \vW_t + \vb \dd t$ in Itô calculus to $\partial \Phi(\vzeta) \mA \dd \vW_t + \left( \partial \Phi(\vzeta) \vb + \frac{1}{2} \partial^2 \Phi(\vzeta)[\mA\mA^\top] \right)$, which guarantees $\vzeta$ to remain on the manifold after taking such an infinitesimal step.
$B$ is the total batch size.
$\mSig_{\parallel}(\vzeta):=\partial \Phi(\vzeta)\mSig(\vzeta) \partial \Phi(\vzeta)$ is the covariance matrix of gradient noise projected onto the tangent space of $\vzeta$ at $\Gamma$,
and $\widehat{\mSig}_{\Diamond}(\vzeta)$ is the noise covariance in the rest, with coordinates rescaled in the eigenbasis $\{(\lambda_i, \vv_i)\}_{i=1}^{d}$ of $\nabla^2 \cL(\vzeta)$:
\begin{align*}
    \widehat{\mSig}_{\Diamond}(\vzeta) &:= {\textstyle \sum}_{i, j: (\lambda_i \ne 0) \lor (\lambda_j \ne 0)} \,\tfrac{1}{\lambda_i + \lambda_j} \left\langle\mSig(\vzeta) - \mSig_{\parallel}(\vzeta), \vv_i \vv_j^\top \right\rangle \vv_i \vv_j^\top.
\end{align*}
\end{definition}

\begin{definition}[Slow SDE for Local SGD with $H \sim \eta^{-1}$, formal] \label{def:slow-sde-local-sgd-lsr-formal}
Consider the scaling $H=\beta/\eta$ for some constant $\beta$. 
    Given $\vzeta_0 \in \Gamma$, define $\vzeta(t)$ as the solution to the following SDE with initial condition $\vzeta(0)=\vzeta_0$:
\begin{align}\label{eq:lsr-zeta-formal}
    \dd \vzeta(t)&=P_{\vzeta}\Big(\underbrace{\tfrac{1}{\sqrt{B}} \mSig_{\parallel}^{\sfrac{1}{2}}(\vzeta)\dd \vW_t}_{\text{(a)\ diffusion on $\Gamma$}}
    \underbrace{-\tfrac{1}{2B} \nabla^3 \cL(\vzeta)[\widehat{\mSig}_{\Diamond}(\vzeta)] \dd t}_{\text{(b)\ drift on $\Gamma$, same as SGD}}
    \underbrace{-\tfrac{K-1}{2B} \nabla^3 \cL(\vzeta)[\widehat{\mPsi}(\vzeta; H \eta)] \dd t}_{\text{(c)\ an extra drift term on $\Gamma$}}
    \Big),
\end{align}
where $K$ is the number of workers, $B, \mSig_{\parallel}(\vzeta)$ and $\widehat{\mSig}_{\Diamond}(\vzeta)$ are the same as in~\Cref{def:slow-sde-sgd-informal}. Here, $\widehat{\mPsi}(\vzeta; \beta)$ is a PSD matrix depending on gradient noise and Hessian defined as follows:
\begin{align}
\widehat{\mPsi}(\vzeta) &:= {\textstyle\sum}_{i, j: (\lambda_i \ne 0) \lor (\lambda_j \ne 0)} \,\tfrac{\psi(\eta H\cdot (\lambda_i + \lambda_j))}{\lambda_i + \lambda_j} \left\langle\mSig_{\Diamond}(\vzeta), \vv_i \vv_j^\top \right\rangle \vv_i \vv_j^\top, \label{eq:hatpsi}
\end{align}
where 
$\{\vv_i\}_{i=1}^{d}$ is a set of eigenvectors of $\nabla^2 \cL(\vzeta)$
that forms an orthonormal eigenbasis, and
$\lambda_1,\dots,\lambda_d$ are the corresponding eigenvalues.
Additionally,
$\psi(x):=\frac{e^{-x}-1+x}{x}$ for $x \ne 0$
and  $\psi(0) = 0$. 
\end{definition}
Notice that $\psi(x)$ monotonically increases in $x$ and has the limit $\lim_{x\to 0}\psi(x)=0$ and $\lim_{x\to \infty}\psi(x)=1$. Therefore, given $\vzeta$, $\widehat{\mPsi}(\vzeta; \beta)$ is a monotonically increasing function of $\beta$ in the eigenspace of the Hessian matrix $\nabla^2 \cL(\vzeta)$.
\subsection{Proof for Theorem~\ref{thm:main-approx}}~\label{sec:proof-approx}

We consider the asymptotics that $\eta \to 0, \alpha \to 0$ and $\alpha = \Omega(\eta^\gamma)$
for all $\gamma > 0$.
We use big-$\cO$ notation to hide constants independent of $\eta, \alpha$,
and use big-$\ctO$ notations to hides constants independent of $\eta, \alpha$
and also polylog factors of $\eta, \alpha$. We define $\vphs := \Phi(\bvths)$ and let $\Rtot := \floor{\frac{T}{H \eta^2}} = \floor{\frac{T}{\alpha^2}}$ be the total number of rounds.
\begin{add}
\myparagraph{Proof outline.} The general framework of our proof follows \citep{li2019stochastic} which demonstrates the close tracking between SGD iterates and the conventional SDE by examining the moments of parameter changes over a small observation interval $\eta$. However, their analysis is not directly applicable to our case. Their SDE approximation is only valid for $\cO(\eta^{-1})$ steps while our \mname involves multiple communication rounds, each containing $\cO(\eta^{-2})$ steps. To tackle this challenge, we treat each round as a continuous-time observation interval of length $\alpha^2$, and then establish that the moments of changes in the manifold projection of Local SGD and the corresponding slow SDE \eqref{eq:qsr-zeta}, specifically the moments of $\vphs[s+1]-\vphs$ and $\vzeta((s+1)\alpha^2)-\vzeta(s\alpha^2)$, are closely aligned.

Notably, though the results in \citep{gu2023why} serve as a building block to compute the moments of $\vphs[s+1]-\vphs$ in Lemmas~\ref{lm:qsr-worker-move} to \ref{lm:qsr-global-moments}, their analysis is not trivially extendable to \mname. This is because their analysis depends on the condition $H\eta=\cO(1)$, and many bounds therein explode as $H\eta \to \infty$, e.g., Theorem 3.3, Lemmas I.14 and I.16 therein. In the context of \mname, where $H\eta=\frac{\alpha^2}{\eta}$ goes to infinity as $\eta$ approaches $0$, the condition $H\eta=\cO(1)$ is violated, rendering the analysis in Gu et al. (2023) ineffective for \mname.
\end{add}

In the following lemma, we present equivalent forms of \eqref{eq:sgd-zeta}, \eqref{eq:lsr-zeta} and \eqref{eq:qsr-zeta} that are less intuitive but more friendly to mathematical analysis.
\begin{theorem}
    Equations~\eqref{eq:sgd-zeta}, \eqref{eq:lsr-zeta}, \eqref{eq:qsr-zeta} can be rewritten as
    the following SDEs, respectively:
    \begin{align}
        \dd \vzeta(t) &=
            \frac{1}{\sqrt{B}} \partial \Phi(\vzeta) \mSig(\vzeta)^{1/2} \dd \vW_t
            + \frac{1}{2B} \partial^2 \Phi(\vzeta)[\mSig(\vzeta)] \dd t, \label{eq:sgd-zeta2} \\
        \dd \vzeta(t) &=
            \frac{1}{\sqrt{B}} \partial \Phi(\vzeta) \mSig(\vzeta)^{1/2}\dd \vW_t
            + \frac{1}{2B} \partial^2 \Phi(\vzeta)[\mSig(\vzeta) + (K-1) \mPsi(\vzeta)] \dd t, \label{eq:lsr-zeta2} \\
        \dd \vzeta(t) &=
            \frac{1}{\sqrt{B}} \partial \Phi(\vzeta) \mSig(\vzeta)^{1/2}\dd \vW_t
            + \frac{K}{2B} \partial^2 \Phi(\vzeta)[\mSig(\vzeta)] \dd t. \label{eq:qsr-zeta2}
    \end{align}
\end{theorem}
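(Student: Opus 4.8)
The plan is to treat this as a purely algebraic identity for the projection operator $P_{\vzeta}$: expand $P_{\vzeta}$ by its definition from \Cref{def:slow-sde-sgd-formal} and then match the diffusion coefficient and the drift coefficient of the two forms term by term, at a fixed $\vzeta\in\Gamma$. Recall that $\partial\Phi(\vzeta)$ is the orthogonal projector onto $T_{\vzeta}\Gamma=\ker\nabla^2\cL(\vzeta)$, hence symmetric and idempotent, and that $\mSig_{\parallel}(\vzeta)=\partial\Phi(\vzeta)\mSig(\vzeta)\partial\Phi(\vzeta)$. Applying $P_{\vzeta}$ to \eqref{eq:sgd-zeta} with $\mA=\tfrac{1}{\sqrt{B}}\mSig_{\parallel}^{1/2}(\vzeta)$ and $\vb=-\tfrac{1}{2B}\nabla^3\cL(\vzeta)[\widehat{\mSig}_{\Diamond}(\vzeta)]$ yields diffusion coefficient $\tfrac{1}{\sqrt{B}}\partial\Phi(\vzeta)\mSig_{\parallel}^{1/2}(\vzeta)$ and drift $-\tfrac{1}{2B}\partial\Phi(\vzeta)\nabla^3\cL(\vzeta)[\widehat{\mSig}_{\Diamond}(\vzeta)]+\tfrac{1}{2B}\partial^2\Phi(\vzeta)[\mSig_{\parallel}(\vzeta)]$, since $\mA\mA^{\top}=\tfrac{1}{B}\mSig_{\parallel}(\vzeta)$; the expansions for \eqref{eq:lsr-zeta} and \eqref{eq:qsr-zeta} are identical except that their drifts carry an additional $-\tfrac{K-1}{2B}\partial\Phi(\vzeta)\nabla^3\cL(\vzeta)[\widehat{\mPsi}(\vzeta)]$ term, respectively an overall factor $K$ on the $\nabla^3\cL(\vzeta)[\widehat{\mSig}_{\Diamond}(\vzeta)]$ term.

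For the diffusion coefficients: symmetry and idempotency of $\partial\Phi(\vzeta)$ give $\partial\Phi(\vzeta)\mSig_{\parallel}(\vzeta)\partial\Phi(\vzeta)=\partial\Phi(\vzeta)\mSig(\vzeta)\partial\Phi(\vzeta)$, so $\tfrac{1}{\sqrt{B}}\partial\Phi(\vzeta)\mSig_{\parallel}^{1/2}(\vzeta)\dd\vW_t$ and $\tfrac{1}{\sqrt{B}}\partial\Phi(\vzeta)\mSig(\vzeta)^{1/2}\dd\vW_t$ have the same quadratic variation and therefore generate SDEs with the same law; this settles the diffusion term in all three cases at once.

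The drift is where the geometry of $\Gamma$ enters, and it does so only through the identity
\[
  \partial^2\Phi(\vzeta)[\mM]\;=\;\partial^2\Phi(\vzeta)[\mM_{\parallel}]\;-\;\partial\Phi(\vzeta)\,\nabla^3\cL(\vzeta)[\widehat{\mM}_{\Diamond}],
\]
valid for every symmetric $\mM$, where $\mM_{\parallel}:=\partial\Phi(\vzeta)\mM\partial\Phi(\vzeta)$ and $\widehat{\mM}_{\Diamond}$ is obtained from $\mM$ by the same rescaling in the eigenbasis of $\nabla^2\cL(\vzeta)$ that produces $\widehat{\mSig}_{\Diamond}$ from $\mSig$. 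This is proved by differentiating the stationarity relation $\nabla\cL(\Phi(\vx))=\vzero$ twice at $\vzeta$ and inverting $\nabla^2\cL(\vzeta)$ on its range; it appears in \citet{li2021happens} and is reused in \citet{gu2023why}. Taking $\mM=\mSig(\vzeta)$ turns the drift of $P_{\vzeta}$ applied to \eqref{eq:sgd-zeta} into $\tfrac{1}{2B}\partial^2\Phi(\vzeta)[\mSig(\vzeta)]$, which is \eqref{eq:sgd-zeta2}. For \eqref{eq:lsr-zeta2} I would apply the identity a second time with $\mM$ the un-rescaled counterpart $\mPsi(\vzeta)$ of $\widehat{\mPsi}(\vzeta)$, using that the sum defining $\widehat{\mPsi}$ (and hence $\mPsi$) has no tangent--tangent block, so $\mPsi_{\parallel}=\vzero$; the two substitutions collapse the drift to $\tfrac{1}{2B}\partial^2\Phi(\vzeta)[\mSig(\vzeta)+(K-1)\mPsi(\vzeta)]$. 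Finally \eqref{eq:qsr-zeta2} is the $\beta\to\infty$ endpoint of this computation: since $\psi(x)\to1$, one has $\widehat{\mPsi}(\vzeta;\beta)\to\widehat{\mSig}_{\Diamond}(\vzeta)$ and $\mPsi(\vzeta;\beta)\to\mSig(\vzeta)-\mSig_{\parallel}(\vzeta)$, so terms (b) and (c) of \eqref{eq:lsr-zeta} merge into $-\tfrac{K}{2B}\nabla^3\cL(\vzeta)[\widehat{\mSig}_{\Diamond}(\vzeta)]$, and a last use of the identity with $\mM=\mSig(\vzeta)$ gives $\tfrac{K}{2B}\partial^2\Phi(\vzeta)[\mSig(\vzeta)]$.

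The main obstacle is establishing and then correctly bookkeeping the $\partial^2\Phi$ identity: one must split the tangent and normal components of each contracted matrix, use that $\widehat{\mSig}_{\Diamond}$ and $\widehat{\mPsi}$ live off the tangent--tangent block (so their $\partial^2\Phi$-images are tangent to $\Gamma$ while $\partial^2\Phi[\mSig_{\parallel}]$, a second-fundamental-form term, is normal), and confirm that the normal (mean-curvature) drifts of the two forms are consistent with $\vzeta$ remaining on $\Gamma$ — this last point is the delicate one in the $K$-times-larger-drift (QSR) case and is the step I would write out with the most care. The remaining manipulations are routine and can largely be transcribed from \citet{li2021happens,gu2023why}.
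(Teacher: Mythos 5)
The paper's own ``proof'' is a one-line citation to Lemmas I.1--I.5 of \citet{gu2023why}, so your expansion is precisely the kind of argument that should back it up, and the skeleton you give is the right one: match quadratic variations for the diffusion, and use the single key identity
\[
\partial^2\Phi(\vzeta)[\mM-\mM_{\parallel}] \;=\; -\,\partial\Phi(\vzeta)\,\nabla^3\cL(\vzeta)\bigl[\widehat{\mM}_{\Diamond}\bigr]
\]
(derived from differentiating $\nabla\cL(\Phi(\vx))=0$ twice and inverting $\nabla^2\cL$ on its range) to convert $\nabla^3\cL$-drifts into $\partial^2\Phi$-drifts. Combined with the $\Phi\circ\Phi=\Phi$ consequence $\partial^2\Phi[\mM_{\parallel}] = \mP_{\perp}\partial^2\Phi[\mM]$ (with $\mP=\partial\Phi(\vzeta)$, $\mP_{\perp}=\mI-\mP$), this handles \eqref{eq:sgd-zeta2} and \eqref{eq:lsr-zeta2} cleanly, exactly as you sketch.

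But the step you flag as ``the delicate one'' in the QSR case is not just delicate --- carried out literally, it does not close, and you should say so rather than defer it. Feeding $\mA=\tfrac{1}{\sqrt{B}}\mSig_{\parallel}^{1/2}$, $\vb=-\tfrac{K}{2B}\nabla^3\cL[\widehat{\mSig}_{\Diamond}]$ into $P_{\vzeta}$ and applying your identity with $\mM=\mSig$ yields drift
\[
-\tfrac{K}{2B}\,\mP\,\nabla^3\cL[\widehat{\mSig}_{\Diamond}] + \tfrac{1}{2B}\partial^2\Phi[\mSig_{\parallel}]
= \tfrac{K}{2B}\partial^2\Phi[\mSig - \mSig_{\parallel}] + \tfrac{1}{2B}\partial^2\Phi[\mSig_{\parallel}]
= \tfrac{1}{2B}\,\partial^2\Phi\bigl[K\mSig - (K-1)\mSig_{\parallel}\bigr],
\]
which differs from the stated $\tfrac{K}{2B}\partial^2\Phi[\mSig]$ by the normal vector $\tfrac{K-1}{2B}\partial^2\Phi[\mSig_{\parallel}] = \tfrac{K-1}{2B}\mP_{\perp}\partial^2\Phi[\mSig]$. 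The same answer is what \eqref{eq:lsr-zeta2} produces in your own $\beta\to\infty$ limit, since $\mPsi\to\mSig-\mSig_{\parallel}$ gives $\partial^2\Phi[\mSig+(K-1)\mPsi]\to\partial^2\Phi[K\mSig-(K-1)\mSig_{\parallel}]$, \emph{not} $K\partial^2\Phi[\mSig]$. This is also exactly the invariance check you wanted to write out: the It\^{o} condition for $\Gamma$-invariance forces the normal drift to be $\tfrac{1}{2B}\partial^2\Phi[\mSig_{\parallel}]$ (fixed by $\sigma\sigma^{\top}=\tfrac{1}{B}\mSig_{\parallel}$, independent of $K$), whereas $\tfrac{K}{2B}\partial^2\Phi[\mSig]$ carries a normal part $K$ times too large. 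The discrepancy is purely normal, so the tangential (on-manifold) dynamics --- which is what drives the sharpness-reduction story and the comparison with the $H\sim\eta^{-1}$ and constant-$H$ SDEs --- is unaffected; but the pointwise equality asserted in \eqref{eq:qsr-zeta2} does not follow from your identity as written, and a correct write-up should either replace the QSR drift by $\tfrac{1}{2B}\partial^2\Phi[K\mSig-(K-1)\mSig_{\parallel}]$ or explicitly argue that the extra $\tfrac{K-1}{2B}\mP_{\perp}\partial^2\Phi[\mSig]$ term can be dropped.
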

\begin{proof}
    Directly apply Lemmas I.1 to I.5 of \cite{gu2023why}, and we have this theorem. 
\end{proof}
Based on \citet{gu2023why}'s analysis, 
below we compute the moments of $\vphs[s+1] - \vphs$ through a series of lemmas. Then,
we follow \citet{gu2023why}'s method of moments to derive the SDE approximation.
\begin{lemma} \label{lm:qsr-worker-move}
    For any round $s \le \Rtot$ and any worker $k \in [K]$,
    if $\vphs \in \Gamma$,
    then it holds with probability at least $1 - \delta$,
    where $\delta = \cO(\poly(\eta))$, that
    $\Phi(\vths_{k, H}) \in \Gamma$ and
    $\normtwo{\vths_{k,H} - \Phi(\vths_{k, H})} = \cO(\sqrt{\eta \log \frac{1}{\eta \delta}})$.
\end{lemma}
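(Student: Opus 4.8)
\textbf{Proof proposal for Lemma~\ref{lm:qsr-worker-move}.}

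The plan is to analyze a single worker $k$ in isolation during round $s$: starting from $\vths_{k,0} = \bvths = \vphs \in \Gamma$ (we are told $\vphs \in \Gamma$), the worker runs $H = (\alpha/\eta)^2$ steps of plain SGD with local batch size $\Bloc$ and learning rate $\eta$. Since a single worker's trajectory is just an SGD trajectory, the goal is to invoke the existing control on how far an SGD iterate drifts from the minimizer manifold $\Gamma$ over $\cO(\eta^{-2})$ steps, and to verify that the number of steps $H = \alpha^2/\eta^2$ falls within the admissible horizon. First I would recall from \citet{gu2023why} (their Theorem 3.3 / the analogous ``stay close to $\Gamma$'' statement for SGD, which holds on a horizon of $\cO(\eta^{-2})$ steps) that, with probability at least $1-\delta$ for $\delta = \cO(\poly(\eta))$, the iterate $\vths_{k,h}$ remains within an $\cO(\sqrt{\eta \log\frac{1}{\eta\delta}})$-tube around $\Gamma$ for all $h \le H$, and in particular the gradient-flow projection $\Phi(\vths_{k,h})$ is well-defined and lies in $\Gamma$ throughout. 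Applying this at $h = H$ gives both conclusions: $\Phi(\vths_{k,H}) \in \Gamma$ and $\normtwo{\vths_{k,H} - \Phi(\vths_{k,H})} = \cO(\sqrt{\eta\log\frac{1}{\eta\delta}})$.

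The one genuine subtlety — and the main obstacle — is exactly the point flagged in the proof outline: many of the quantitative bounds in \citet{gu2023why} are stated for Local SGD with $H\eta = \cO(1)$, and here $H\eta = \alpha^2/\eta \to \infty$. So I would need to be careful to use only those of their lemmas whose hypotheses concern a \emph{single} SGD trajectory over $\cO(\eta^{-2})$ steps (which is the natural horizon for the Slow SDE and does \emph{not} require $H\eta$ bounded), rather than the multi-worker bounds (e.g.\ their Lemmas I.14, I.16) that blow up as $H\eta\to\infty$. Concretely, the per-worker ``stay in the tube'' estimate only needs: (i) the regularity Assumptions~\ref{assumption:l} and \ref{assumption:manifold} — smoothness of $\cL$, $\mSig^{1/2}$, compactness of $\Gamma$, bounded stochastic gradients; (ii) that the step count is $O(\eta^{-2})$, so that the accumulated higher-order drift stays $O(1)$ in continuous time and the martingale fluctuations are $O(\sqrt{\eta})$ up to log factors; and (iii) a high-probability bound controlling the worst-case excursion, obtained via a standard supermartingale / Doob-type argument on the squared distance to $\Gamma$, which is where the $\sqrt{\log\frac{1}{\eta\delta}}$ factor enters (union bound over $O(\eta^{-2}) = \poly(1/\eta)$ steps, each failing with probability $\poly(\eta)$, still leaves total failure probability $\delta = \cO(\poly(\eta))$).

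To make this rigorous I would proceed in the following order. First, set up local coordinates in a tubular neighborhood $U$ of $\Gamma$, writing $\vths_{k,h}$ in terms of its projection $\Phi(\vths_{k,h}) \in \Gamma$ and a normal component; by Assumption~\ref{assumption:manifold} the Hessian is nondegenerate in the normal directions, giving a contraction of the normal component under the gradient drift. Second, decompose one SGD step into drift plus noise and show that, as long as the iterate stays in $U$, the squared normal distance $\rho_h := \normtwo{\vths_{k,h} - \Phi(\vths_{k,h})}^2$ satisfies a recursion of the form $\E[\rho_{h+1}\mid \cF_h] \le (1 - c\eta)\rho_h + C\eta^2$ (contraction toward the fixed point $\rho^\ast = \cO(\eta)$), with bounded martingale increments of size $\cO(\eta^{3/2})$. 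Third, run a standard high-probability argument (e.g.\ a stopping-time version of this recursion combined with Azuma/Freedman on the noise martingale and a union bound over $h \le H = \poly(1/\eta)$ steps) to conclude that with probability $\ge 1-\delta$ the iterate never leaves $U$ and $\sup_{h\le H}\rho_h = \cO(\eta\log\frac{1}{\eta\delta})$; taking square roots at $h=H$ gives the stated bound, and staying inside $U$ guarantees $\Phi(\vths_{k,H})$ is defined and in $\Gamma$. The bookkeeping is routine once one commits to importing the single-trajectory estimate from \citet{gu2023why}; the only thing to double-check is that their estimate is genuinely stated (or trivially re-derivable) for $\Theta(\eta^{-2})$ steps independent of any $H\eta = O(1)$ assumption, which it is, since it is precisely the horizon on which their Slow SDE for SGD is valid.
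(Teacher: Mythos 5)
Your overall strategy matches the paper's exactly: reduce the single worker's trajectory over one round to plain SGD with batch size $\Bloc$, then invoke the distance-to-manifold bound of \citet{gu2023why} (their Theorem~3.3), whose horizon of $\cO(\eta^{-2})$ steps covers the $H = \alpha^2/\eta^2$ local steps here. You also correctly flag the potential issue that Theorem~3.3 is stated under a $H\eta = \cO(1)$ hypothesis, which the actual local-step count violates.

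Where you diverge from the paper is in how you resolve that concern. You propose a from-scratch re-derivation (tubular neighborhood, contraction recursion $\E[\rho_{h+1}\mid\cF_h]\le(1-c\eta)\rho_h + C\eta^2$, Freedman/Azuma plus a union bound), which would indeed work but is a lot of machinery. The paper instead observes that once $K'=1$, the ``communication period'' parameter in Theorem~3.3 is vacuous — averaging one worker with itself is a no-op — so you are free to \emph{declare} $H'=1/\eta$ when invoking the theorem, which trivially satisfies $H'\eta=\cO(1)$. The result then applies verbatim to the single-worker SGD trajectory across the full $\cO(\eta^{-2})$-step horizon; no re-derivation is needed. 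This is worth internalizing: the scaling restriction $H\eta=\cO(1)$ in Gu et al.\ controls inter-worker divergence before averaging, and with one worker that quantity is identically zero, so the restriction carries no content. Your conclusion that the single-trajectory estimate is ``genuinely stated (or trivially re-derivable)'' for $\Theta(\eta^{-2})$ steps is correct in spirit, but the paper gives the one-line reason rather than re-proving it. Everything else in your outline is sound.
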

\begin{proof}
    The key insight is that the dynamics of each worker before averaging in each round is just the standard
    SGD with a smaller batch size, $\Bloc$.
    Since the distance bound to $\Gamma$, Theorem 3.3 in~\citet{gu2023why},
    also applies to SGD by taking $K'=1$ and $H' = \frac{1}{\eta}$,
    we can apply this result to obtain that 
    $\normtwo{\vths_{k,H} - \Phi(\vths_{k, H})} = \cO(\sqrt{\eta \log \frac{1}{\eta \delta}})$.
\end{proof}
\begin{add}
Before computing the moments of the change in manifold projection for each worker $\Phi(\vths_{k,H}) - \vphs$, we introduce Preliminary Lemmas ~\ref{lm:ito-taylor} and ~\ref{lm:li26}. Specifically, the  Itô-Taylor expansion lemma~\ref{lm:ito-taylor}  is a straightforward application of Lemma B.7 of \citep{malladi2022sdes} on a bounded set. \Cref{lm:li26} is adapted from Lemma 26 of \citep{li2019stochastic}. 

 Let $\vX(t)$ be the solution to the SDE $\dd \vX(t)=\vb(\vX(t))\dd t + \vsig (\vX(t))\dd \vW_t$, where $\vb(\cdot):\R^d\to\R^d$ is the drift function and $\vsig(\cdot):\R^{d} \to \R^{d\times d}$ is the diffusion matrix. Both $\vb(\cdot)$ and $\vsig(\cdot)$ belong to $\cC^4$. Let $\cS$ be a bounded invariant set of the SDE. That is, if $\vX(0)\in \cS$, for any $t\geq 0$, $\vX(t) \in \cS$ almost surely. Let $\etae$ be the ``effective learning rate'', which can be viewed as the length of the continuous-time observation interval for $\vX(t)$. Then we have the following lemma.

\begin{prelimlemma}[Itô-Taylor expansion]\label{lm:ito-taylor}
    Let $g:\R^d\to \R$ be any $\cC^4$-smooth function. 
    Define 
    \begin{align}\label{eq:ito-taylor}
        \cA g(\vx):=\sum_{i\in[D]}b_i(\vx)\partial_i g(\vx) + \frac{1}{2}\sum_{i, j\in[D]}\left(\sum_{l \in [D]} \sigma_{i, l}(\vx) \sigma_{l,j}(\vx) \right)\partial^2_{i, j}g(\vx).
    \end{align}
    Given $\vX(t)=\vx \in \cS$, there exists a constant $C$ independent of $\eta_e$ such that 
    \begin{align*}
        \labs{\E[g(\vX(t+\etae))-g(\vx)-\etae \cA g(\vx)]}\leq C\eta_e^2
    \end{align*}
\end{prelimlemma}
\begin{proof}
    WLOG, we prove the case for $t=0$. Due to the Markovian property of Itô processes, the same proof can be done for any $t>0$ by a time shift. Give $\vX(0)=\vx \in \cS$, by Itô's lemma, 
    \begin{align*} 
        g(\vX(\eta_e)) = g(\vx) + \int _{0}^{\eta_e} \cA g(\vX(s))\dd s + \int_0^{\etae}\inner{\Lambda g(\vX(s))}{\dd \vW_s},
    \end{align*}
    where $\Lambda (\vx) := \vsig(\vx)^{\top} \nabla g(\vx)$.
\end{proof}
Further apply Itô's lemma to $\cA g(\vX(s))$ and we have
\begin{align*}
    g(\vX(\etae)) &= g(\vx) + \int_0^{\etae}\left( \cA g(\vx)  +\int_0^s \cA^2 g(\vX(r))\dd r + \int_0^s  \inner{ \Lambda \cA g(\vX(r)) }{\dd \vW_r}\right)\dd s  \\
    & \quad +\int_0^{\etae}\inner{\Lambda g(\vX(s))}{\dd \vW_s}\\
    & = g(\vx) + \etae \cA g(\vx) + \int_0^{\etae}\int_0^s \cA^2 g(\vX(r))\dd r\dd s\\
    & \quad    + \int_0^{\etae}\int_0^s  \inner{ \Lambda \cA g(\vX(r)) }{\dd \vW_r}\dd s + \int_0^{\etae}\inner{\Lambda g(\vX(s))}{\dd \vW_s}.
\end{align*}
Take expectation on both sides, and the last two terms become zero:
\begin{align*}
   \E g(\vX(\etae)) &= g(\vx) + \etae \cA g(\vx) + \int_0^{\etae}\int_0^s \cA^2 g(\vX(r))\dd r\dd s.
\end{align*}
Since $\vX(s)$ belongs to the bounded set $\cS$, there exists a constant $C$ independent of $\etae$ such that $|\cA^2 g(\vy)| \leq C$ for all $\vy \in \cS$. Therefore,
\begin{align*}
\labs{\E[g(\vX(\etae))-g(\vx)-\etae \cA g(\vx)]}\leq C\eta_e^2.
\end{align*}

\begin{prelimlemma}[Adaptation of Lemma 26 in \citep{li2019stochastic}]\label{lm:li26}
    Given $\vX(t)=\vx \in \cS$, denote the change in $\vX(s)$ over time interval $\etae$ as $\tvDelta(\vx, t, \etae):=\vX(t+\etae)-\vx$. Then, for all $\vx\in \cS$ and $t\ge 0$, there exists a constant $C'$ independent of $\etae$ such that
    \begin{align*}
        \E[\prod_{j=1}^{n+1}\labs{\tDelta_{i_j}(\vx, t, \etae)}]\leq C'\etae^{\frac{n+1}{2}},\quad \forall 1 \leq i_1, \cdots, i_{n+1} \leq  d,
    \end{align*}
    where $n \geq 1$.
\end{prelimlemma}
\begin{proof}
    WLOG, we prove the case for $t=0$. Due to the Markovian property of Itô processes, the same proof can be done for any $t>0$ by a time shift. Denote $\tvDelta(\vx):=\tvDelta(\vx, 0, \etae)$ for brevity.  By definition,
    \begin{align*}
        \tvDelta(\vx)=\int_0^{\etae}\vb(\vX(s))\dd s + \int_0^{\etae} \vsig(\vX(s))\dd \vW_s.
    \end{align*}
    By triangle inequality, for all $i\in [d]$,
    \begin{align*}
        \labs{\tDelta_i(\vx)}\leq \lnormtwo{\int_0^{\etae}\vb(\vX(s))\dd s} + \lnormtwo{\int_0^{\etae} \vsig(\vX(s))\dd \vW_s}.
    \end{align*}
Therefore,
    \begin{align*}
        \E[\prod_{j=1}^{n+1}{\labs{\tDelta_{i_j}(\vx)}}]& \leq \left(\E\lnormtwo{\int_0^{\etae}\vb(\vX(s))\dd s} + \E\lnormtwo{\int_0^{\etae} \vsig(\vX(s))\dd \vW_s}\right)^{n+1}\\
        & \leq 2^n \left(\underbrace{\E\lnormtwo{\int_0^{\etae}\vb(\vX(s))\dd s}}_{\cT_1}\right)^{n+1} + 2^n \left(\underbrace{\E\lnormtwo{\int_0^{\etae} \vsig(\vX(s))\dd \vW_s}}_{\cT_2}\right)^{n+1}.
    \end{align*}
By triangle inequality,
\begin{align*}
    \cT_1 \leq \E \left[\int_0^{\etae} \normtwo{\vb(\vX(s)}\dd s\right].
\end{align*}
By Cauchy-Schwarz inequality and Itô's isometry, 
\begin{align*}
    \cT_2 &\leq \sqrt{\E\lnormtwo{{\int_0^{\etae} \vsig(\vX(s))\dd \vW_s}}^2 }=\sqrt{\E \int_0^{\etae} \tr[\vsig(\vX(s)^{\top}\vsig(\vX(s))]\dd s}.
\end{align*}
Since $\vX(s) \in \cS$ almost surely and $\cS$ is a bounded set, there exists constants $C_1$ and $C_2$ such that $\cT_1\leq C_1 \etae, \cT_2 \leq C_2 \etae^{0.5}$. Substituting the bounds for $\cT_1$ and $\cT_2$ back, we have the lemma.
\end{proof}
\end{add}
\begin{lemma} \label{lm:qsr-local-moments}
    For any round $s \le \Rtot$ and any worker $k \in [K]$,
    given $\vphs \in \Gamma$, then
    \begin{align}
        &\E[\Phi(\vths_{k,H}) - \vphs\mid \vphs] = \frac{\alpha^2}{2\Bloc}
            \partial^2\Phi(\vphs)[\mSig(\vphs)]
            + \cO(\alpha^4), \label{eq:qsr-local-moments-1} \\
        &\E\left[
            (\Phi(\vths_{k,H}) - \vphs)(\Phi(\vths_{k,H}) - \vphs)^\top
           \mid \vphs \right] = \frac{\alpha^2}{\Bloc} \mSig_{\parallel}(\vphs)
            + \cO(\alpha^4), \label{eq:qsr-local-moments-2} \\
        &\E\left[
            (\Phi(\vths_{k,H}) - \vphs)^{\otimes 3} \mid \vphs
            \right] = \cO(\alpha^4), \label{eq:qsr-local-moments-3} \\
        &\E\left[
            \normtwo{\Phi(\vths_{k,H}) - \vphs}^{6} \mid \vphs
            \right] = \cO(\alpha^{6}).\label{eq:qsr-local-moments-6} 
    \end{align}
\end{lemma}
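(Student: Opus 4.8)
The plan is to exploit a simple structural fact: \emph{within a single communication round, worker $k$ runs nothing but plain SGD with batch size $\Bloc$}. The iterates $\vths_{k,0}=\bvths,\vths_{k,1},\dots,\vths_{k,H}$ are produced by \eqref{eq:upd intro localsgd} with no averaging, so they form an SGD trajectory of $H=(\alpha/\eta)^2$ steps started at a point whose gradient-flow projection is $\vphs\in\Gamma$. Since $H=(\alpha/\eta)^2\le T/\eta^2$ (because $\alpha$ is a small constant with $\alpha^2\le T$), I would invoke the SGD weak approximation in item~1 of \Cref{thm:main-approx} — which is exactly the $\cO(\eta^{-2})$-horizon result of \citet{gu2023why}/\citet{li2021happens} and, unlike their multi-round Local SGD bounds, carries no $H\eta=\cO(1)$ restriction — with $B$ replaced by $\Bloc$ and the horizon taken to be $T$, evaluating it at step $s=H$. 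This gives, for any fixed $\cC^4$ test function $g$,
\[
 \E\!\left[g(\Phi(\vths_{k,H}))\mid\vphs\right]=\E\!\left[g(\vzeta(\alpha^2))\mid\vzeta(0)=\vphs\right]+\ctO(\eta^{0.25}),
\]
where $\vzeta(\cdot)$ solves the SGD Slow SDE \eqref{eq:sgd-zeta2} with $B\to\Bloc$; compactness of $\Gamma$ (\Cref{assumption:manifold}) makes the error constant uniform over $\vphs\in\Gamma$, and \Cref{lm:qsr-worker-move} already guarantees $\Phi(\vths_{k,H})\in\Gamma$ off an event of probability $\cO(\poly(\eta))$, so all the moments are well defined. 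The crucial quantitative observation is that, since $\alpha=\Omega(\eta^\gamma)$ for every $\gamma>0$, the error $\ctO(\eta^{0.25})$ is smaller than any fixed power of $\alpha$, in particular $\cO(\alpha^4)$ (and $\cO(\alpha^6)$). So the task reduces to computing the moments of $\vzeta(\alpha^2)-\vphs$ for the SGD Slow SDE over the short continuous time $\etae=\alpha^2$.

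For the first three moments I would apply the Itô–Taylor expansion \Cref{lm:ito-taylor} — legitimate because $\Gamma$ is a bounded invariant set of \eqref{eq:sgd-zeta2} — to the centered monomials $g(\vx)=\prod_i(\vx-\vphs)_{a_i}$ of degrees $1,2,3$, using $\E[g(\vzeta(\alpha^2))\mid\vzeta(0)=\vphs]=g(\vphs)+\alpha^2\,\cA g(\vphs)+\cO(\alpha^4)$. Reading off the generator $\cA$ of \eqref{eq:sgd-zeta2} from its drift $\vb(\vzeta)=\tfrac{1}{2\Bloc}\partial^2\Phi(\vzeta)[\mSig(\vzeta)]$ and diffusion $\vsig(\vzeta)=\tfrac{1}{\sqrt{\Bloc}}\partial\Phi(\vzeta)\mSig(\vzeta)^{1/2}$, and using that $\partial\Phi(\vzeta)$ is the orthogonal projection onto $T_{\vzeta}\Gamma$ (hence symmetric and idempotent) so that $\vsig\vsig^\top=\tfrac{1}{\Bloc}\mSig_{\parallel}(\vzeta)$: the degree-one monomial gives $\cA g(\vphs)=\tfrac{1}{2\Bloc}(\partial^2\Phi(\vphs)[\mSig(\vphs)])_a$, yielding \eqref{eq:qsr-local-moments-1}; the degree-two monomial has vanishing first-order term at $\vphs$ and $\cA g(\vphs)=(\vsig\vsig^\top)_{ab}(\vphs)=\tfrac{1}{\Bloc}(\mSig_{\parallel}(\vphs))_{ab}$, yielding \eqref{eq:qsr-local-moments-2}; and every degree-three centered monomial has $g(\vphs)=0$, $\nabla g(\vphs)=0$, $\nabla^2 g(\vphs)=0$, so $\cA g(\vphs)=0$ and the remainder is $\cO(\alpha^4)$, yielding \eqref{eq:qsr-local-moments-3}. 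In each case the $\cO(\alpha^4)$ remainder of \Cref{lm:ito-taylor} and the $\ctO(\eta^{0.25})$ transfer error from the previous paragraph both land inside the claimed $\cO(\alpha^4)$.

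The sixth-moment bound \eqref{eq:qsr-local-moments-6} needs a different tool, since Itô–Taylor would only deliver $\cO(\alpha^4)$. Here I would use \Cref{lm:li26} with $n+1=6$ and $\etae=\alpha^2$, which gives $\E[\prod_{j=1}^{6}\abs{(\vzeta(\alpha^2)-\vphs)_{i_j}}]=\cO(\alpha^6)$ for every index tuple; expanding $\normtwo{\vzeta(\alpha^2)-\vphs}^6=(\sum_i(\vzeta(\alpha^2)-\vphs)_i^2)^3$ into $d^3$ such products yields $\cO(\alpha^6)$ on the SDE side, and transferring via the weak approximation with $g(\vx)=\normtwo{\vx-\vphs}^6$ (absorbing $\ctO(\eta^{0.25})$ into $\cO(\alpha^6)$) gives the claim; on the bad event where $\Phi(\vths_{k,H})\notin\Gamma$, the projection still lies in the bounded set $\Gamma\cup\{\vthetanull\}$, contributing only $\cO(\poly(\eta))$, which is negligible.

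I expect the genuinely delicate points to be (i) justifying that the $\cO(\eta^{-2})$-horizon SGD approximation can be applied off the shelf to each worker over the short Slow-SDE time $\alpha^2$ — this is precisely the place where the argument sidesteps the breakdown of \citet{gu2023why}'s multi-round bounds (their Theorem~3.3 and Lemmas~I.14, I.16) as $H\eta\to\infty$, by confining that difficulty to the inter-worker averaging handled in the later lemmas — and (ii) the bookkeeping that turns $\ctO(\eta^{0.25})$ into $\cO(\alpha^4)$, which rests entirely on the hypothesis $\alpha=\Omega(\eta^\gamma)$ for all $\gamma>0$; everything else is routine Itô–Taylor and moment expansion.
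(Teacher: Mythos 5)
Your proposal is correct and follows essentially the same route as the paper's proof: treat a single round of worker $k$ as plain SGD with batch size $\Bloc$, invoke the SGD weak-approximation theorem with continuous time $T'=\alpha^2$ to transfer moments to the SGD Slow SDE at error $\ctO(\eta^{0.25})=o(\alpha^k)$, compute the first three moments by Itô--Taylor (\Cref{lm:ito-taylor}), and use \Cref{lm:li26} for the sixth moment. If anything your version is slightly cleaner, correctly writing the diffusion coefficient as $\frac{1}{\sqrt{\Bloc}}\partial\Phi(\vzeta)\mSig(\vzeta)^{1/2}$ and stating explicitly that the first-order term of the degree-two generator vanishes \emph{at} $\vphs$.
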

\begin{proof}
    Again, the key insight is that the dynamics of each worker before averaging in each round is just the standard
    SGD with a smaller batch size, $\Bloc$.
    Since the SDE approximation theorem for Local SGD, Theorem 3.2 in~\citet{gu2023why},
    also applies to SGD by taking $K'=1$ and $H' = \frac{1}{\eta}$,
    we can apply this result to obtain that, for any $\cC^4$-smooth function $g(\vtheta)$,
    it holds for $\vzeta$ defined in~\eqref{eq:sgd-zeta2} with the initial condition $\vzeta(0)=\vphs$ that
    \begin{align}\label{eq:gu main thm}
        \abs{\E[g(\Phi(\vths_{k, H}))] - \E[g(\vzeta(T'))]} = \ctO(\eta^{0.25}),
    \end{align}
   where $T'=\alpha^2$ is the continuous-time observation interval.
   
    To establish a connection between the moments of $\Phi(\vths_{k,t})-\vphs$ and those of $\vzeta (T') - \vphs$,  we can let the function $g(\vtheta)$ to take specific forms, each returning a single coordinate of $\vtheta - \vphs$,
    $(\vtheta - \vphs)(\vtheta - \vphs)^\top$,
    $(\vtheta - \vphs)^{\otimes 3}$
    and $\normtwo{\vtheta - \vphs}^6$. \begin{add}For example, to relate $\Phi(\vths_{k,H})-\vphs$ to $\vzeta (T') - \vphs$, let $g(\vtheta)=\inner{\ve_1}{\vtheta}$ where $\ve_1=(1, 0, \cdots, 0)^{\top}$. Substitute $g$ into \eqref{eq:gu main thm}, and we get $|\inner{\ve_1}{\E[\Phi(\vths_{k,H})-\vphs]-\E[\Phi(\vzeta(T'))-\vphs]}|=\ctO(\eta^{0.25})$. We can obtain the same results for all coordinates by letting $g(\vtheta)=\inner{\ve_i}{\vtheta}$ for all $i\in [D]$. Therefore, $ |\E[\Phi(\vths_{k, H})-\vphs]-\E[\vzeta(T')-\vphs]|=\ctO(\eta^{0.25})$. Similarly, we can show that the LHS of \eqref{eq:qsr-local-moments-2} to \eqref{eq:qsr-local-moments-6} are only changed by $\ctO(\eta^{0.25}) = o(\poly(\alpha))$
    when replacing $\Phi(\vths_{k,H})$ with $\vzeta(T')$.
    
    Then, it suffices to compute the moments for $\vzeta(T')$ and verify that they match the RHS of \eqref{eq:qsr-local-moments-1} to \eqref{eq:qsr-local-moments-6}. Since $\Gamma$ is compact  and invariant for the SDE \eqref{eq:qsr-zeta2} (Lemma I.39 in \citep{gu2023why}), we can apply
   the Itô-Taylor expansion in \Cref{lm:ito-taylor} with $\etae=\alpha^2$, $\vX(t)=\vzeta(t)$, $\vb(\vzeta )=\frac{K}{2 B}
            \partial^2\Phi(\vzeta)[\mSig(\vzeta)]$ and $\vsig(\vzeta)=\sqrt{\frac{1}{2B}}\partial \Phi(\vzeta) \mSig^{1/2}(\vzeta)$. 

To obtain the first moment \eqref{eq:qsr-local-moments-1}, let $g(\vzeta)=\inner{\ve_1}{\vzeta-\vphs}$ and substitute it into \eqref{eq:ito-taylor}. By \Cref{lm:ito-taylor}, we have
\begin{align*}
    \labs{\E[\zeta_1(T')]-b_1(\vphs)}=\cO(T'^2)=\cO(\alpha^4).
\end{align*}
We can repeat this process for all coordinates of $\vzeta(T')$ to obtain 
\begin{align}\label{eq:qsr-zeta-moments-1}
    \E[\vzeta(T')-\vphs\mid \vphs] = \frac{\alpha^2}{2\Bloc}\partial^2\Phi(\vphs)[\mSig(\vphs)] + \cO(\alpha^4),
\end{align}
and thus \eqref{eq:qsr-local-moments-1}. 

For the second moment \eqref{eq:qsr-local-moments-2}, define $\gij{i}{j}(\vzeta)=\inner{\mM_{i,j}}{(\vzeta-\vphs)(\vzeta-\vphs)^{\top}}$, where $M_{i',j'}=\begin{cases}
    1, (i',j')=(i,j),\\
    0, \mathrm{otherwise}
\end{cases}$. Since $\partial_{i'} \gij{i}{j}(\vzeta)=0$ for all $i'$, the first term of $\cA\gij{i}{j}(\vzeta)$ vanishes. It suffices to compute the second term. When $i=j$, $\partial^2_{i',j'}\gij{i}{i}(\vzeta)=\begin{cases}
    2, (i', j')=(i, i)\\
    0, \mathrm{otherwise}
\end{cases}$. Therefore, 
\begin{align}\label{eq:ii}
    \cA \gij{i}{i}(\vzeta)=\sum_{l \in [D]}\sigma_{i,l}(\vzeta)\sigma_{l,i}(\vzeta), \quad \forall i\in[D].
\end{align}
When $i\neq j$, $\partial^2_{i',j'}\gij{i}{j}(\vzeta)=\begin{cases}
    1, (i', j')\in \set{(i, j),(j, i)}\\
    0, \mathrm{otherwise}
\end{cases}$. Therefore,
\begin{align}\label{eq:ij}
     \cA \gij{i}{j}(\vzeta)=\sum_{l \in [D]}\sigma_{i,l}(\vzeta)\sigma_{l,j}(\vzeta), \quad i\neq j.
\end{align}
Combining \eqref{eq:ii} and \eqref{eq:ij} and noticing that $\gij{i}{j}(\vphs)=0$ for all $i,j$, we have 
\begin{align}\label{eq:qsr-zeta-moments-2}
    \E[(\vzeta(T')-\vphs)(\vzeta(T')-\vphs)^{\top}\mid \vphs]= \frac{\alpha^2}{B} \mSig_{\parallel}(\vphs)+ \cO(\alpha^4),
\end{align}
and thus \eqref{eq:qsr-local-moments-2}. 

For the third moment \eqref{eq:qsr-local-moments-3}, define $\gijl{i}{j}{l}(\vzeta)=\inner{\ve_i\otimes \ve_j \otimes \ve_l}{(\Phi(\vths_{k,H})-\vphs)^{\otimes 3}}$. Noticing that $\partial_{i'}\gijl{i}{j}{l}(\vphs)=0$ for all $i'$ and $\partial^2_{i', j'}\gijl{i}{j}{l}(\vphs)=0$ for all $(i',j')$, we have 
\begin{align}\label{eq:qsr-zeta-moments-3}
    \E\left[ (\vzeta(T') - \vphs)^{\otimes 3} \mid \vphs\right] = \cO(\alpha^4),
\end{align}
and thus \eqref{eq:qsr-local-moments-3}. 

Finally, by directly applying \Cref{lm:li26}, we have 
\begin{align}\label{eq:qsr-zeta-moments-6}
    \E\left[\normtwo{\vzeta(T') - \vphs}^{6} \mid \vphs\right] = \cO(\alpha^{6})
\end{align}
and thus \eqref{eq:qsr-local-moments-6}.
\end{add}
\end{proof}
Now we are ready to compute the moments for $\vphs[s+1] - \vphs$ at each round:
\begin{lemma} \label{lm:qsr-global-moments}
    For any round $s \le \Rtot$,
    given $\vphs \in \Gamma$, then
    \begin{align}
        &\E[\vphs[s+1] - \vphs \mid \vphs] = \frac{\alpha^2}{2\Bloc}
            \partial^2\Phi(\vphs)[\mSig(\vphs)]
            + \cO(\alpha^4), \label{eq:qsr-global-moment-1} \\
        &\E\left[
            (\vphs[s+1]- \vphs)(\vphs[s+1] - \vphs)^\top\mid \vphs
            \right] = \frac{\alpha^2}{B} \mSig_{\parallel}(\vphs)
            + \cO(\alpha^4), \label{eq:qsr-global-moment-2} \\
        &\E\left[
            \normtwo{\vphs[s+1]- \vphs}^6\mid \vphs
            \right] = \cO(\alpha^{6}). \label{eq:qsr-global-moment-6}
    \end{align}
\end{lemma}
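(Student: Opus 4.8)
The plan is to run the method of moments of \citet{li2019stochastic} and \citet{gu2023why}, lifting it from the single--worker bounds of Lemma~\ref{lm:qsr-local-moments} to the averaged iterate $\bvths[s+1] = \tfrac1K\sum_{k=1}^K \vths_{k,H}$. First I would condition on $\vphs \in \Gamma$ together with the ``good event'' $\cG$ on which every worker satisfies the conclusion of Lemma~\ref{lm:qsr-worker-move}, i.e.\ $\Phi(\vths_{k,H}) \in \Gamma$ and $\normtwo{\vths_{k,H} - \Phi(\vths_{k,H})} = \cO(\sqrt{\eta \log\frac{1}{\eta\delta}})$; a union bound gives $\PP[\cG^{c}] = \cO(K\delta) = \cO(\poly(\eta))$. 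On $\cG^{c}$ all quantities appearing in \eqref{eq:qsr-global-moment-1}--\eqref{eq:qsr-global-moment-6} are uniformly bounded (the iterates move $\cO(1)$, and $\Gamma$, $\vthetanull$ are bounded, by Assumptions~\ref{assumption:l} and \ref{assumption:manifold}), so $\cG^{c}$ contributes $\cO(\poly(\eta))$ to every moment, which is $o(\poly(\alpha))$ under the assumed asymptotics $\alpha = \Omega(\eta^{\gamma})$ for all $\gamma$. Hence it suffices to compute the moments on $\cG$, and there I may also replace each $\vths_{k,H}$ by $\Phi(\vths_{k,H})$ at the cost of an error that is $\cO(\sqrt{\eta\,\poly\log(1/\eta)}) = o(\poly(\alpha))$ in every moment.

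Next I would Taylor--expand $\Phi$ about $\vphs$. Writing $\vd := \bvths[s+1] - \vphs = \tfrac1K\sum_k(\vths_{k,H}-\vphs)$, which is $\cO(\alpha)$ on $\cG$ by Lemma~\ref{lm:qsr-local-moments} and the residual bound,
\[
  \vphs[s+1] - \vphs = \partial\Phi(\vphs)\,\vd + \tfrac12\,\partial^2\Phi(\vphs)[\vd\vd^{\top}] + \tfrac16\,\partial^3\Phi(\vphs)[\vd^{\otimes 3}] + \cO(\normtwo{\vd}^{4}).
\]
Substituting $\vd = \tfrac1K\sum_k(\vths_{k,H}-\vphs)$ and expanding the tensor powers, each term splits into a ``diagonal'' part (all worker indices equal), which under $\E[\,\cdot\mid\vphs]$ reduces by i.i.d.-ness (the sampling-with-replacement convention) to a single-worker expectation, and ``off-diagonal'' parts, which factor into products of single-worker means by independence. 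Since $\E[\Phi(\vths_{1,H})-\vphs\mid\vphs]$ and $\E[(\Phi(\vths_{1,H})-\vphs)^{\otimes 2}\mid\vphs]$ are both $\cO(\alpha^{2})$ by Lemma~\ref{lm:qsr-local-moments}, every off-diagonal contribution is $\cO(\alpha^{4})$ or smaller, hence absorbed; the surviving diagonal terms are handled by plugging in \eqref{eq:qsr-local-moments-1}--\eqref{eq:qsr-local-moments-6}. The factor $1/K$ coming from averaging $K$ independent copies is exactly what converts the single-worker covariance $\tfrac{\alpha^{2}}{\Bloc}\mSig_{\parallel}(\vphs)$ into the global $\tfrac{\alpha^{2}}{B}\mSig_{\parallel}(\vphs)$ with $B=K\Bloc$, giving \eqref{eq:qsr-global-moment-2}; and the sixth moment \eqref{eq:qsr-global-moment-6} follows from the same expansion together with $\normtwo{\vd}=\cO(\alpha)$ and \eqref{eq:qsr-local-moments-6} (via power-mean and H\"older to pass between fourth and sixth moments).

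The hard part will be the first moment \eqref{eq:qsr-global-moment-1}. The quadratic Taylor term has expectation of order $\alpha^{2}$: its diagonal part is $\tfrac{1}{2K}\partial^2\Phi(\vphs)\big[\E[(\vths_{1,H}-\vphs)^{\otimes 2}\mid\vphs]\big] = \tfrac{\alpha^{2}}{2B}\partial^2\Phi(\vphs)[\mSig_{\parallel}(\vphs)] + \cO(\alpha^{4})$, while $\partial\Phi(\vphs)\vd$ only recovers the tangential component of the single-worker drift. I would then need to show these pieces recombine with the single-worker drift so that the net result is precisely $\tfrac{\alpha^{2}}{2\Bloc}\partial^2\Phi(\vphs)[\mSig(\vphs)] + \cO(\alpha^{4})$ (equivalently $\tfrac{\alpha^{2}K}{2B}\partial^2\Phi(\vphs)[\mSig(\vphs)]$, matching the drift of \eqref{eq:qsr-zeta2}). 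This is exactly where the structural identities for the gradient-flow projection from \citet{gu2023why} (their Lemmas~I.1--I.5, i.e.\ the equivalence between the $P_{\vzeta}$-form and the $\partial^2\Phi$-form of the Slow SDEs, and the attendant relations between $\partial\Phi$, $\partial^2\Phi$, $\nabla^{3}\cL$ and the noise covariance on $\Gamma$) are indispensable: they identify the normal-direction part of $\partial^2\Phi(\vphs)[\vd\vd^\top]$ with the chord-versus-arc correction already built into the single-worker drift, so that only the stated term survives. A final technical point is to make the conditional-expectation bookkeeping rigorous, since the moments in Lemma~\ref{lm:qsr-local-moments} are stated unconditionally rather than on $\cG$; this is absorbed by the $\cO(\poly(\eta))$ bad-event bound established in the first step.
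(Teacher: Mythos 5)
Your proposal runs along the same lines as the paper's proof: condition on the good event from Lemma~\ref{lm:qsr-worker-move} (absorbing the $\cO(\poly(\eta))$ bad event), replace each $\vths_{k,H}$ by $\Phi(\vths_{k,H})$, Taylor-expand $\Phi$ about $\vphs$, and contract with the single-worker moments of Lemma~\ref{lm:qsr-local-moments} using worker independence. Your $\vd$ is the paper's $\vDelta_2$, its $\vDelta_1 = \frac1K\sum_k(\Phi(\vths_{k,H})-\vphs)$ carries the i.i.d.\ structure, and your diagonal/off-diagonal split reproduces the paper's computation of $\E[\vDelta_1\vDelta_1^\top]$, including the $1/K$ that turns $\Bloc$ into $B$ in \eqref{eq:qsr-global-moment-2}; the sixth moment is handled identically.

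The one place you go beyond the paper in explicitness, but also stop short, is \eqref{eq:qsr-global-moment-1}. The paper contents itself with $\E[\vDelta_3] = \E[\partial\Phi(\vphs)\vDelta_1 + \cO(\normtwo{\vDelta_1}^2)]$ and cites the projection property of $\partial\Phi(\vphs)$, while you exhibit the quadratic Taylor term $\frac12\partial^2\Phi(\vphs)[\vd\vd^\top]$, note that its expectation $\frac{\alpha^2}{2B}\partial^2\Phi(\vphs)[\mSig_\parallel(\vphs)]$ is of the same order $\alpha^2$ as the target, and flag that structural identities for $\Phi$ are needed for the pieces to recombine. That flag is correct, but your ``chord-versus-arc'' gloss is still a gesture, not a verification. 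To close it you would write down the identity obtained by differentiating $\Phi\circ\Phi=\Phi$ twice on $\Gamma$, namely $\partial^2\Phi(\vphs)[\mP_\parallel\mM\mP_\parallel] = (\mI - \partial\Phi(\vphs))\,\partial^2\Phi(\vphs)[\mM]$ with $\mP_\parallel=\partial\Phi(\vphs)$, so that your quadratic term equals $\frac{\alpha^2}{2B}(\mI-\partial\Phi(\vphs))\partial^2\Phi(\vphs)[\mSig(\vphs)]$, and then check that the leftover $\frac{\alpha^2}{2\Bloc}\bigl(1-\frac1K\bigr)(\mI-\partial\Phi(\vphs))\partial^2\Phi(\vphs)[\mSig(\vphs)]$ vanishes, i.e.\ that the normal component of $\partial^2\Phi(\vphs)[\mSig(\vphs)]$ is zero on $\Gamma$. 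The paper's proof is no more explicit than yours at this last step, so the two arguments are essentially the same; a fully rigorous write-up of either would have to make this identity explicit.
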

\begin{proof}
    Let $\vDelta_1 := \frac{1}{K} \sum_{k=1}^{K} \Phi(\vths_{k, H}) - \vphs$.
    By~\Cref{lm:qsr-local-moments},
    \begin{align*}
        \E[\vDelta_1]
            &= \frac{1}{K} \sum_{k=1}^{K} \E[\Phi(\vths_{k, H}) - \vphs] \\
            &= \frac{\alpha^2}{2\Bloc}
            \partial^2\Phi(\vphs)[\mSig(\vphs)]
            + \cO(\alpha^4), \\
        \E[\vDelta_1 \vDelta_1^\top] &=
        \frac{1}{K^2} \sum_{j=1}^{K} \sum_{k=1}^{K}
            \E\left[
                (\Phi(\vths_{j,H}) - \vphs)(\Phi(\vths_{k,H}) - \vphs)^\top
                \right] \\
            &= K \cdot \frac{\alpha^2}{B} \mSig_{\parallel}(\vphs)
                + \cO(\alpha^4) + K(K-1) \cdot \cO(\alpha^4) \\
            &= \frac{\alpha^2}{\Bloc} \mSig_{\parallel}(\vphs)
                + \cO(\alpha^4).
    \end{align*}
    Let $\vDelta_2 := \frac{1}{K}\sum_{k=1}^{K} \vths_{k, H} - \vphs$.
    Then $\vDelta_2 = \vDelta_1 + \frac{1}{K}\sum_{k=1}^{K} (\vths_{k, H} - \Phi(\vths_{k, H}))$.
    Finally, let $\vDelta_3 := \Phi(\frac{1}{K}\sum_{k=1}^{K} \vths_{k, H}) - \vphs$.
    By \Cref{lm:qsr-worker-move}
    it holds with probability at least $1 - \delta$ that
    $\normtwo{\vths_{k, H} - \Phi(\vths_{k, H})} = \cO(\sqrt{\eta \log \frac{1}{\eta}})$ and thus $\lnormtwo{\vDelta_2 - \vDelta_1} = \cO(\sqrt{\eta \log \frac{1}{\eta}})$. Let $\delta = \eta^{100}$. 
    Since $\normtwo{\partial \Phi(\,\cdot\,)}$ is always bounded by $\cO(1)$,
    we can always add an error of $\cO(\delta)$ to our bounds for the moments
    and ignore the possibility that this event does not happen.
    To prove~\eqref{eq:qsr-global-moment-1}, we do Taylor expansion of $\Phi$ at $\vphs$, then
    \begin{align*}
        \E[\vDelta_3]
        &= \E[\Phi(\vphs + \vDelta_2) - \vphs] \\
        &= \E[\Phi(\vphs + \vDelta_1) - \vphs] + \lO{\sqrt{\eta \log \tfrac{1}{\eta}} + \delta} \\
        &= \E\left[
            \partial \Phi(\vphs)\vDelta_1 + \lO{\normtwo{\vDelta_1}^2}
            \right] + \lO{\sqrt{\eta \log \tfrac{1}{\eta}} + \delta} \\
        &= \frac{\alpha^2}{2\Bloc}
            \partial^2\Phi(\vphs)[\mSig(\vphs)]
            + \cO(\alpha^4).
    \end{align*}
    The last equation uses the fact that $\partial \Phi(\vphi)$, for $\vphi \in \Gamma$, is a projection matrix onto the tangent space of $\Gamma$ at $\vtheta$ (Lemma 4.3 of \citep{li2021happens}).
    
    To prove~\eqref{eq:qsr-global-moment-2}, again we do Taylor expansion of $\Phi$ at $\vphs$  to connect $\Phi(\vphs + \vDelta_2)$ with $\Phi(\vphs+\vDelta_1)$
    and obtain:
    \begin{align*}
        \E[\vDelta_3\vDelta_3^\top]
        &= \E\left[(\Phi(\vphs + \vDelta_2) - \vphs)(\Phi(\vphs + \vDelta_2) - \vphs)^\top\right] \\
        &= \E\left[(\Phi(\vphs + \vDelta_1) - \vphs)(\Phi(\vphs + \vDelta_1) - \vphs)^\top\right]
            + \lO{\sqrt{\eta \log \tfrac{1}{\eta}} + \delta}.
    \end{align*}
    Applying the second-order Taylor expansion gives
    \begin{align*}
        \E[\vDelta_3\vDelta_3^\top]
        &= \E\biggl[
            \left(\partial \Phi(\vphs)\vDelta_1\right)
            \left(\partial \Phi(\vphs)\vDelta_1\right)^\top
            \\
        &\qquad + \left(\partial^2 \Phi(\vphs)[\vDelta_1, \vDelta_1] \partial \Phi(\vphs)\vDelta_1
        + \vDelta_1^{\top}\partial \Phi(\vphs)\partial^2 \Phi(\vphs)[\vDelta_1, \vDelta_1]^\top \right) \\
        &\qquad + \cO(\normtwo{\vDelta_1}^4) \biggl] + \lO{\sqrt{\eta \log \tfrac{1}{\eta}} + \delta}.
    \end{align*}
    By \eqref{eq:qsr-local-moments-3} and the fact that $\normtwo{\partial^2\Phi(\vphs)}$ is bounded, the above equation can be simplified to
    \begin{align*}
        \E[\vDelta_3\vDelta_3^\top]
        &= \E\left[\partial \Phi(\vphs) \vDelta_1 \vDelta_1^\top \partial \Phi(\vphs)\right]
        + \cO(\alpha^4) + \cO(\alpha^4)
        + \lO{\sqrt{\eta \log \tfrac{1}{\eta}} + \delta} \\
        &= \frac{\alpha^2}{B} \mSig_{\parallel}(\vphs) + \cO(\alpha^4).
    \end{align*}
    Finally, for~\eqref{eq:qsr-global-moment-6},
    we can repeat the above process to bound $\E[\normtwo{\vDelta_1}^3]$, and then conclude that
    $\E[\normtwo{\Delta_3}^3] = \cO(\alpha^{6})$.
\end{proof}

Now we are ready to prove our main theorem.
\begin{proof}[Proof for~\Cref{thm:main-approx}]
    Let $\vzeta(t)$ be the solution of~\eqref{eq:qsr-zeta2}.
    Let $r$ be some integer greater than $s$. If $\vphs \in \Gamma$, define $\hat{\vzeta}_{s, r}$ as the random variable sampled from
    the distribution of $\vzeta(\alpha^2 r)$ conditioned on $\vzeta(\alpha^2 s) = \vphs$.
    If $\vphs = \vthetanull\notin \Gamma$, define $\hat{\vzeta}_{s, r} = \vzero$.

    If $\vtheta \in \Gamma$, define $u(\vtheta, t_1, t_2)$ the expected value of $g(\vzeta(t_2))$ conditioned on $\vzeta(t_1) = \vtheta$. If $\vtheta = \vthetanull$, define $u(\vtheta, t_1, t_2) = \vzero$.
    \begin{add}
    That is,
    \begin{align*}
    u(\vtheta, t_1, t_2):=\begin{cases}
        \E[g(\vzeta(t_2))\mid \vzeta(t_1)=\vtheta], &\vtheta \in \Gamma, \\
        \vzero, &\vtheta\notin \Gamma.
     \end{cases}
    \end{align*}
    \end{add}

    For all $n \le \Rtot$, we have
    \begin{align*}
        \abs{\E[g(\vphs[n])] - \E[g(\vzeta(n \alpha^2))]}
        &= \abs{\E[g(\hat{\vzeta}_{n, n})] - \E[g(\hat{\vzeta}_{0, n})]} \\
        &\le \sum_{s=0}^{n-1} \abs{\E[g(\hat{\vzeta}_{s+1, n})] - \E[g(\hat{\vzeta}_{s, n})]} \\
        &\added{=\sum_{s=0}^{n-1} \labs{\E\left[\E[g(\hat{\vzeta}_{s+1, n}) \mid \vphs[s+1]] \right]- \underbrace{\E\left[\E[g(\hat{\vzeta}_{s, n}) \mid \vphs \right]}_{\cT_s} }}.
    \end{align*}
\begin{add}
By the law of total expectation and the Markovian property of Itô process, 
    \begin{align*}
        \cT_s &= \E\left[\E[g(\vzeta(n\alpha^2)) \mid \vzeta(s\alpha^2)=\vphs] \right] \\
        & =\E\left\{\E\left[\E[g(\vzeta(n\alpha^2)) \mid \vzeta((s+1)\alpha^2)]\bigg | \vzeta(s\alpha^2)=\vphs \right]\right\}\\
        & = \E[u(\hat{\vzeta}_{s, s+1}, (s+1)\alpha^2, n\alpha^2)].
    \end{align*}
\end{add}
Therefore, 
\begin{align*}
     \abs{\E[g(\vphs[n])] - \E[g(\vzeta(n \alpha^2))]}= \sum_{s=0}^{n-1} \labs{\underbrace{\E[u(\vphs[s+1], (s+1)\alpha^2, n \alpha^2)] - \E[u(\hat{\vzeta}_{s, s+1}, (s+1)\alpha^2, n\alpha^2)]}_{\cT'_s}}.
\end{align*}
By the law of total expectation,
\begin{align*}
    \abs{\cT'_s}&\leq \abs{\underbrace{\E[u(\vphs[s+1], (s+1)\alpha^2, n\alpha^2)-u(\hat{\vzeta}_{s, s+1}, (s+1)\alpha^2, n\alpha^2) \mid \vphs, \vphs[s+1] \in \Gamma]}_{\cA_s}}\\
    &+ |u(\vzero)|\PP(\vphs \notin \Gamma \mathrm{\ or \ } \vphs[s+1] \notin \Gamma),
\end{align*}
where the latter term comes from the definition of $u(\vtheta, t_1, t_2)$ and $\hvzeta_{s,r}$. By Lemma I.11 in \citep{gu2023why}, there exists a constant $\epsilon$ such that if $\min_{\vphi \in \Gamma}\normtwo{\vtheta - \vphi}\leq \epsilon$, then $\Phi(\vtheta) \in \Gamma$. Therefore, substituting $\delta=\eta^{100}$ into \Cref{lm:qsr-worker-move}, we can conclude that the latter term is at most $\cO(\eta^{100})$.

For $\cA_s$, notice that the two terms differ only in the first position. By \Cref{lm:qsr-global-moments} and \eqref{eq:qsr-zeta-moments-1} to \eqref{eq:qsr-zeta-moments-6}, the moments of $\vphs[s+1]-\vphs$ and $\hvzeta_{s,s+1} - \vphs$ are close to each other. Therefore, it suffices to discuss the smoothness of $u$ and perform Taylor expansion. By Proposition 25 of \citep{li2019stochastic}, since $g\in \cC^4$, $u(\vphi, t_1, t_2)$ satisfies the compatibility condition for the Whitney Extension Theorem for $\vphi \in \Gamma$. Therefore, there exists a function $\tu(\vphi, t_1, t_2)$ that is $\cC^4$ in $\vphi$ for $\vphi \in \R^d$ and satisfies $\tu(\vphi, t_1, t_2)=u(\vphi, t_1, t_2)$ for all $\vphi\in\Gamma$. Denote $\tu(\vphi, s\alpha^2, n\alpha^2)$ as $\tu_{s, n}(\vphi)$ for brevity. Now, we can safely substitute $u$ in $\cA_s$ with $\tu$ and perform Taylor expansion:
\begin{add}
\begin{align*}
    \cA_s &= \underbrace{\E[\tu(\vphs+(\vphs[s+1]-\vphs)), (s+1)\alpha^2, n\alpha^2)\mid  \vphs, \vphs[s+1] \in \Gamma]}_{\cA'_s}\\
    &\quad -\underbrace{\E[\tu(\vphs+(\hvzeta_{s, s+1}-\vphs)), (s+1)\alpha^2, n\alpha^2)\mid \vphs \in \Gamma]}_{\cA''_s}.
\end{align*}
\begin{align*}
    \cA'_s&=\tu(\vphs)+\inner {\partial\tu_{s+1, n}(\vphs)}{\E[\vphs[s+1]-\vphs\mid \vphs, \vphs[s+1]\in \Gamma]}\\
    & \quad + \frac{1}{2}\inner{\partial^2 \tu_{s+1, n}(\vphs)}{\E[(\vphs[s+1]-\vphs)(\vphs[s+1]-\vphs)^{\top}\mid \vphs, \vphs[s+1]\in \Gamma]} \\
    & \quad + \cO(\E[\normtwo{\vphs[s+1]-\vphs}^3\mid \vphs, \vphs[s+1]\in \Gamma]).
\end{align*}
\begin{align*}
    \cA''_s&=\tu(\vphs)+\inner {\partial\tu_{s+1, n}(\vphs)}{\E[\hvzeta_{s, s+1}-\vphs\mid \vphs\in \Gamma]}\\
    & \quad + \frac{1}{2}\inner{\partial^2 \tu_{s+1, n}(\vphs)}{\E[(\hvzeta_{s, s+1}-\vphs)(\hvzeta_{s, s+1}-\vphs)^{\top}\mid \vphs\in \Gamma]} \\
    & \quad + \cO(\E[\normtwo{\hvzeta_{s, s+1}-\vphs}^3\mid \vphs\in \Gamma])
\end{align*}
Substituting in $\delta=\eta^{100}$ into \Cref{lm:qsr-worker-move}, we can conclude that, given $\vphs\in \Gamma$, the event $\set{\vphs[s+1]\in \Gamma}$ happens with probability at least $1-\eta^{100}$. We can replace the condition $\vphs, \vphs[s+1]\in \Gamma$ with $\vphs\in\Gamma$ in $\cA_s'$ with an error of only $\cO(\eta^{100})$. Therefore, 
\begin{align*}
    \cA_s &= \inner {\partial\tu_{s+1, n}(\vphs)}{\E[(\vphs[s+1]-\vphs) - (\hvzeta_{s, s+1}-\vphs)\mid \vphs\in \Gamma]}\\
     &\quad + \frac{1}{2}\bigg \langle \partial^2 \tu_{s+1, n}(\vphs), \E[((\vphs-\vphs)(\hvzeta_{s, s+1}-\vphs)^{\top} \\
     & \qquad - (\hvzeta_{s, s+1}-\vphs)(\hvzeta_{s, s+1}-\vphs)^{\top}\mid \vphs\in \Gamma]\bigg\rangle\\
     & \quad +  \cO(\E[\normtwo{\vphs[s+1]-\vphs}^3\mid \vphs\in \Gamma])+\cO(\E[\normtwo{\hvzeta_{s, s+1}-\vphs}^3\mid \vphs\in \Gamma])+\cO(\eta^{100}).
\end{align*}
Since $\vphs\in \Gamma$ where $\Gamma$ is a compact set, both $\normtwo{\partial\tu_{s+1, n}(\vphs)}$ and $ \normtwo{\partial^2 \tu_{s+1, n}(\vphs)}$ are bounded. Substituting \Cref{lm:qsr-global-moments} and \eqref{eq:qsr-zeta-moments-1} to \eqref{eq:qsr-zeta-moments-6} to the expression of $\cA_s$, we have $\cA_s=\cO(\alpha^4)$ and thus $\abs{\cT'_s} = \cO(\alpha^4)$. Summing $\abs{\cT'_s}$ up, we have  $\abs{\E[g(\vphs[n])] - \E[g(\vzeta(n \alpha^2))]} \le \cO(n \alpha^4) \le \cO(\alpha^2)$, which completes the proof.
\end{add}
\end{proof}

\begin{add}
\section{Details for Communication Time Measurement}\label{sec:wall-clock detail}
It is straightforward to measure the time duration for the entire training, but it is hard to directly measure the communication time due to the asynchronous nature of CUDA computation. Hence, in our experiments, we derive the communication time from the difference in total training time across runs with various communication frequencies. 

Specifically, let $\Ttotpara, \Ttot_{H_1}$ be the total time durations of data parallel approaches and local gradient methods with $H=H_1$, respectively. Also let $\Tcmpara, \Tcm_{H_1}$ be their communication times,
and $\Tcppara, \Tcp_{H_1}$ be their computation time. Ideally, setting the synchronization period to $H_1$ reduces the communication volume exactly by a factor of $\frac{1}{H_1}$, so these variables satisfy the following relationships:
\begin{align*}
    \Tcp_{H_1} &= \Tcppara,\\
    \Tcm_{H_1} &= \frac{1}{H_1}\Tcmpara,\\
    \Tcm_{H_1} + \Tcp_{H_1} &= \Ttot_{H_1},\\
    \Tcmpara + \Tcppara &= \Ttotpara.
\end{align*}
Then we can express the communication and computation times in terms of the total time duration $\Ttotpara$ and $\Ttot_{H_1}$:
\begin{align*}
     \Tcmpara & = H_1 \Tcm_{H_1} = \frac{H_1}{H_1 - 1} (\Ttotpara - \Ttot_{H_1}),\\
     \Tcppara &= \Tcp_{H_1} = \Ttotpara - \Tcmpara = \frac{H_1}{H_1-1}\Ttot_{H_1}-\frac{1}{H_1 - 1}\Ttotpara.
\end{align*}
Therefore, we empirically measure the total time duration $\tTtotpara$ and $\tTtot_{H_1}$ for some $H_1$, then use the following formulas to obtain estimates of the communication and computation times:
\begin{align}
     \tTcmpara &= \frac{H_1}{H_1 - 1} (\tTtotpara - \tTtot_{H_1}),\label{eq:comm}\\
     \tTcppara &= \frac{H_1}{H_1-1}\tTtot_{H_1}-\frac{1}{H_1 - 1}\tTtotpara.
\end{align}
These estimates are very predictive for the total time duration of local gradient methods with a different $H$. For example, when $H=H_2$, 
we can predict the total time duration $\Ttot_{H_2}$ as follows:
\begin{align}
    \Tcm_{H_2} & \approx \frac{1}{H_2}\tTcmpara, \label{eq:comm h2} \\
    \Ttot_{H_2}& \approx \frac{1}{H_2}\tTcmpara +  \tTcppara. \label{eq:predict-h2}
\end{align}
We find that the relative error $\frac{\abs{\tTtot_{H_2} -\Ttot_{H_2} }}{\tTtot_{H_2}}\times 100\%$, where $\Ttot_{H_2}$ denotes the measured total time, is only $\sim 1\%$ across all configurations in \Cref{table:time}, where we set $H_1=2, H_2=4$ for ResNet-152 and $H_1=4, H_2=8$ for ViT-B. The small relative error suggests that our method offers a close approximation to the actual time. For this reason, in \Cref{table:time}, we report the communication time estimated by \eqref{eq:comm} and \eqref{eq:comm h2} for data-parallel approaches and local gradient methods with a constant synchronization period. 

For \mname, since its communication volume relative to data parallel approaches, denoted as $\fqsr$, can be easily computed given the learning rate schedule, the growth coefficient $\alpha$ and the base synchronization period $\constH$, we can estimate its communication time $\Tcmqsr$ in a similar vein to \eqref{eq:comm} and \eqref{eq:comm h2}:
\begin{align}
    \Tcmqsr&\approx\fqsr \tTcmpara. \label{eq:comm qsr}
\end{align}
We report the communication time estimated by \eqref{eq:comm qsr} in \Cref{table:time} for \mname.

\section{Discussion on More Aggressive Scalings}\label{sec:power3}

\begin{figure}[H]
\begin{center}
    \subfigure[\small Local SGD on ResNet-152]{
    \includegraphics[width=0.36\textwidth]{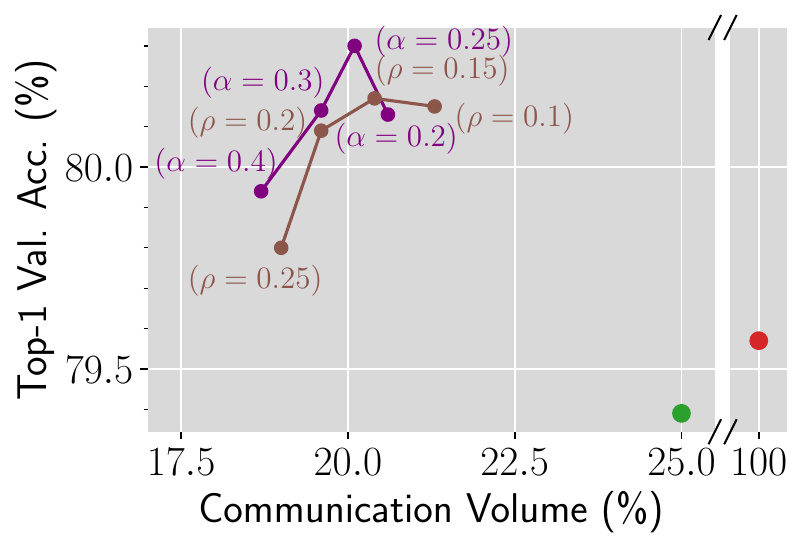}\label{fig:resnet_power3}
    }
    \subfigure[\small Local AdamW on ViT-B]{
        \includegraphics[width=0.51\textwidth]{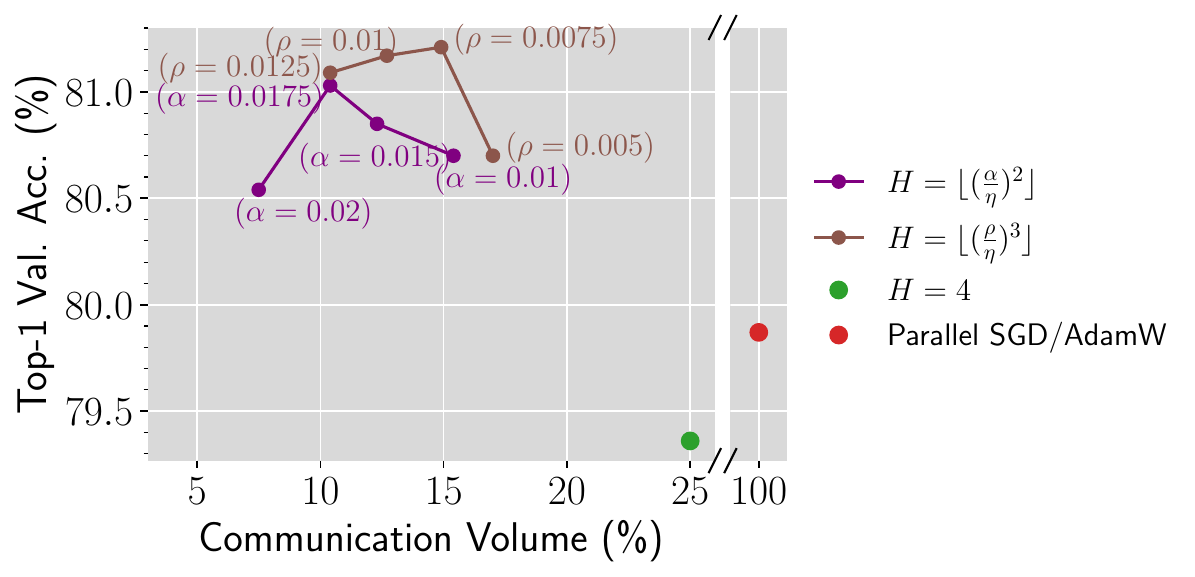}\label{fig:vit_power3}
    }
    \vspace{-0.15in}
\end{center}
 \caption{\small The cubic rule $H=\floor{(\frac{\rho}{\eta})^3}$ with a properly tuned $\rho$ can either outperform or underperform the QSR in test accuracy, depending on the training scenarios. }
        \label{fig:power3}
\end{figure}

Apart from the scalings discussed in \Cref{sec:derivation}, one can consider more aggressive scalings, e.g., $H= \floor{(\rho/\eta)^{-3}}$. Compared with QSR $H = \floor{(\alpha/\eta)^{-2}}$ that uses the same amount of communication, this cubic synchronization rule communicates more frequently at earlier stages but much less at later stages.
Our theory in \Cref{thm:main-approx} suggests that taking $H\sim \eta^{-3}$ blows up the approximation error, but as shown in~\Cref{fig:power3},
this cubic rule with a properly tuned $\rho$ can either outperform or underperform the QSR in test accuracy, depending on the training scenarios. 

We argue that this is because our quasistatic view may break in the very late phase of cosine decay, where the learning rate decays so fast that $\eta_t$ sees significant decay within a single communication round. 
As an example where the cubic rule performs better, we plot in~\Cref{fig:power3-curve} the test accuracy curves of the QSR and cubic rule for training ViT-B with Local AdamW and batch size 4096.
Same as our experiment setup in~\Cref{sec:exp},
the learning rate peaks at the value $0.008$ and then decays to nearly zero ($10^{-6}$) following a cosine decay schedule.
Setting \(H = \floor{(0.0075/\eta)^{-3}}\) results in 
consistently worse test accuracy than QSR (with the same communication volume) before epoch 265.
However, during the final communication round, which spans from epoch 265 to 300, the cubic rule catches up with QSR.
During this period, the learning rate dramatically decreases from $3.5 \times 10^{-4}$ to nearly zero,
but our quasistatic view assumes that the learning rate $\eta_t$ should remain relatively constant for at least one communication round.









\begin{wrapfigure}{r}{0.4\textwidth}
 \centering
        \vspace{-0.1in}
        \includegraphics[width=1\linewidth]{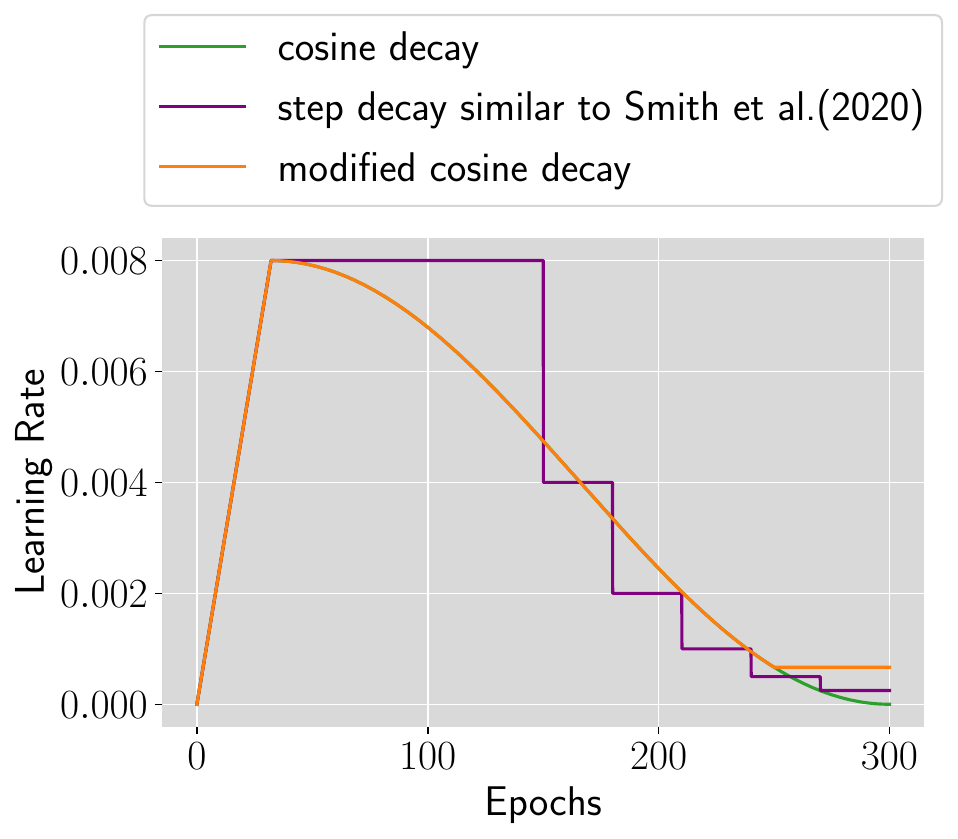} 
        \caption{\small An illustration of the learning rate schedules.}
        \vspace{-0.1in}
        \label{fig:schedule-app} 
\end{wrapfigure}


Based on the above observation, 
we argue that the cubic rule offers benefits over QSR only for certain schedules that have a rapid tail of learning rate decay near the end of training.
To validate this view,
we replace the cosine decay schedule with a variant of the step decay schedule in \citet{smith2020generalization}. In our step decay schedule, given a total of 300 epochs, the learning rate remains at its peak until epoch 150, after which it is divided by 2 every 30 epochs. See \Cref{fig:schedule-app} for an illustration. Unlike the cosine schedule, this step decay schedule maintains a constant learning rate for a significant amount of time. As shown in \Cref{table:smith step},  the cubic rule yields inferior generalization performance compared with our \mname, even after careful tuning of $\rho$. See \Cref{subsec:detail vit} for training details. 

Another way to corroborate our view is to run both scalings with a modified cosine learning rate schedule, which ceases to decay after a specific epoch $t''$ and remains constant until training ends. See \Cref{fig:schedule-app} for an illustration of this modified cosine schedule. As shown in \Cref{table:modified cosine}, \mname consistently outperforms the cubic rule across various choices of $t''$. Further training details can be found in \Cref{subsec:detail vit}. The probable reason is that when the learning rate is held constant,  the cubic rule results in an excessively large $H$, negatively impacting optimization.

Given these failure cases of the cubic rule, we generally recommend using the QSR and leave it to future work to design a better rule to deal with schedules that have a rapid tail of learning rate decay.
\begin{table}[t]
\vspace{-0.3in}
\caption{\small We validate that the higher test accuracy achieved by $H\sim \eta^{-3}$ relies on the rapid decaying learning rate within a synchronization period via ablation studies on ViT-B. In \Cref{table:smith step}, we replace the cosine decay schedule with a variant of the step decay schedule in \citet{smith2020generalization}. In \Cref{table:modified cosine}, we run both scalings with a modified cosine decay schedule that ceases to decay at some epoch $t''$. \mname consistently outperforms $H\sim \eta^{-3}$ in both cases. }\label{table:beat power3}
\begin{minipage}{0.5\textwidth}
    \centering
    \subfigure[Local AdamW with step decay.]{
    \begin{tabular}{@{} c c c @{}}
        \toprule
       Method &  Val. Acc. (\%)& Comm. (\%) \\
        \midrule
        Parallel AdamW & 78.51 & 100\\
        \midrule[0.2pt]
         Local AdamW ($H$=4) &78.70&25\\
        +QSR ($\constH=4$) & \textbf{80.99} &\textbf{13.2}\\
        +$H\sim \eta^{-3}$  ($\constH=4$) & 80.86& 14.4\\
        \bottomrule
    \end{tabular}\label{table:smith step}
    }
\end{minipage}
\begin{minipage}{0.5\textwidth}
\setlength{\tabcolsep}{10pt}
    \centering
    \subfigure[Local AdamW with modified cosine decay. Both scalings use $\constH=4$. ]{
    \begin{tabular}{@{} c c c @{}}
        \toprule
       Method & $t''$ &  Val. Acc. (\%) \\
        \midrule
        \mname &260 & \textbf{80.75}\\
        $H\sim \eta^{-3}$ & 260 & 80.51\\
        \midrule[0.2pt]
         \mname &250 & \textbf{80.34}\\
        $H\sim \eta^{-3}$ & 250 & 79.91\\
       \midrule[0.2pt]
          \mname &240 & \textbf{79.85}\\
        $H\sim \eta^{-3}$ & 240 & 79.72\\
        \bottomrule
    \end{tabular}\label{table:modified cosine}
    }
\end{minipage}

\end{table}

\begin{figure}[t]
    \centering
    \begin{minipage}[b]{0.6\textwidth}
     \centering
        \includegraphics[width=0.7\textwidth]{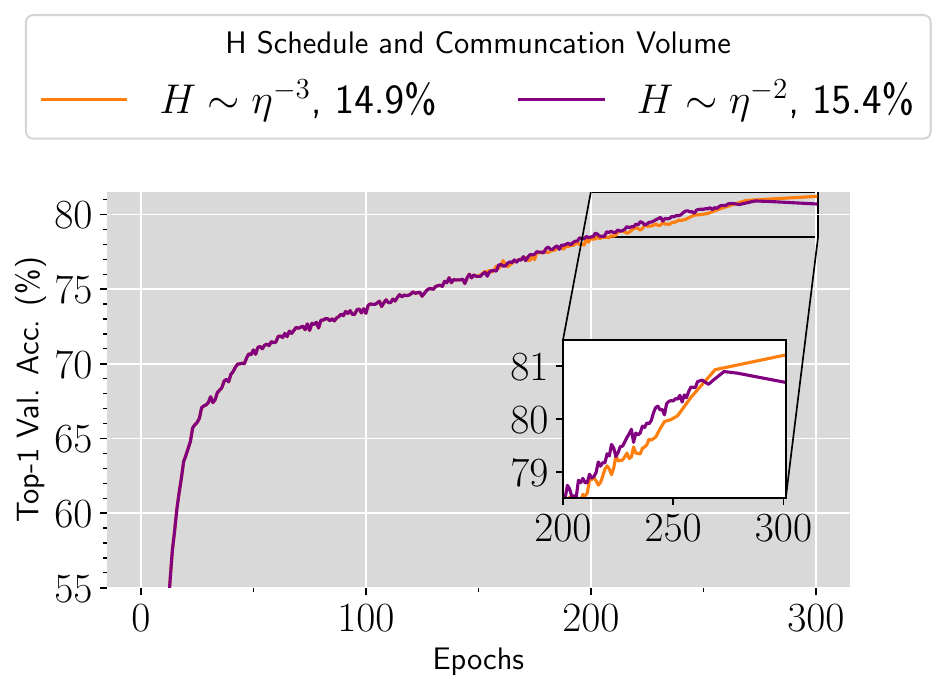}
        \caption{\small Test accuracy curves for \mname ($\alpha=0.01$) and the cubic rule ($\rho=0.0075$). The cubic rule results in 
consistently worse test accuracy than QSR (with the same communication volume) before the last communication round.}
        \label{fig:power3-curve}
    \end{minipage}
\vspace{-0.2in}
\end{figure}

\newpage

\section{Comparison with Local SGD/AdamW + SWAP}\label{sec:swap}
In this section, we compare \mname with the modified Stochastic Weight Averaging in Parallel (SWAP) algorithm, termed ``Local SGD/AdamW + SWAP''. Specifically, the original SWAP proposed by \citep{gupta2020swap} uses SGD for the majority of the training process and only switches to local updates at some $t_0$ near the end, thus saving less communication than \mname. To compare SWAP with QSR at a similar level of communication volume, we experiment with the modified SWAP, which starts with Local SGD/AdamW using a constant communication period $H_{\mathrm{base}}$ and, after some time $t_0$, lets workers perform local updates with a final model averaging. As shown in \Cref{fig:swap}, QSR outperforms Local SGD/AdamW SWAP though we have tuned $t_0$ carefully for the latter. 

\begin{figure}[H]
\begin{center}
    \subfigure[\small Local SGD on ResNet-152]{
    \includegraphics[width=0.36\textwidth]{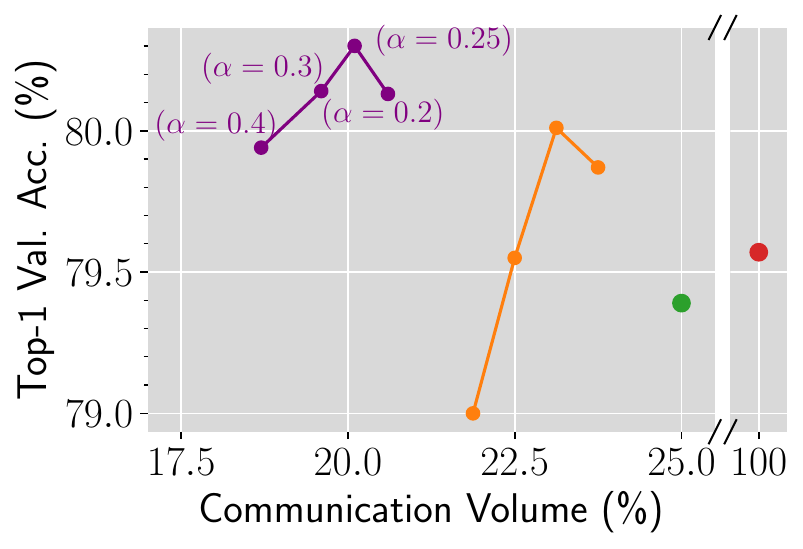}
    } \label{fig:swap-resnet}
    \subfigure[\small Local AdamW on ViT-B]{
        \includegraphics[width=0.51\textwidth]{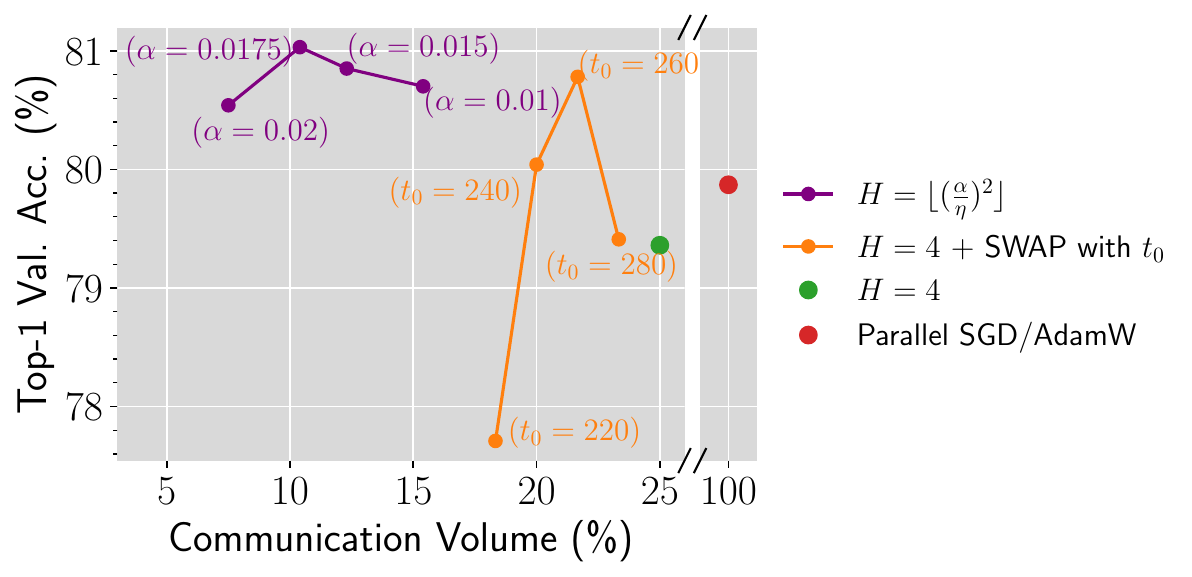}\label{fig:swap-vit}
    }
    \vspace{-0.15in}
\end{center}
 \caption{\small \mname outperforms Local SGD/AdamW + SWAP on both models. See \Cref{sec:exp details} for training details.}
        \label{fig:swap}
\end{figure}

\end{add}

\end{document}